\documentclass{article}
\usepackage{iclr2027_conference}

\usepackage{pratik}
\usepackage{fvextra}
\graphicspath{{./figures/}}

\title{
\noindent the dichotomy between pattern recognition and step-by-step reasoning
}

\author{Amrut Nadgir, Vijay Balasubramanian, Pratik Chaudhari\\
University of Pennsylvania\\
\texttt{\{amrutn,vijay,pratikac\}@upenn.edu}
}

\iclrfinalcopy 

\begin{document}
\clearpage

\maketitle
\lhead{}

\begin{abstract}
We argue that pattern recognition and step-by-step reasoning are two ends of a spectrum. A large language model (LLM) learns to reason step-by-step when data is structured such that the next token depends on a small amount of preceding context. Inference in LLMs resembles pattern recognition when the next token depends on a large amount of preceding context. If the next token depends on only the $c$ most recent tokens, reasoning traces are paths on a De Bruijn graph whose nodes are $c$-length contexts and edges are next-token transitions between contexts. The set of reasoning traces of a task forms a directed acyclic subgraph of the De Bruijn graph. An LLM that has learned all edges of this subgraph can compose them to solve longer, unseen tasks, i.e., it reasons step-by-step. We prove that the number of edges is vanishingly small compared to the number of reasoning traces. Empirically, the number of training samples a transformer needs is a power law in the number of edges, so learning to reason step-by-step is sample efficient. We can induce De Bruijn structure in any task by maintaining a ``state'' that makes future reasoning independent of the past. The frequency of states in the reasoning trace determines $c$. We show, by fine-tuning Qwen2.5-1.5B-Instruct to solve equations and answer questions about stories, that frequent states (small $c$) result in higher accuracy but greater fragility to perturbations at test time. LLMs trained with a large $c$ are only as good as models that perform pattern recognition without reasoning. A moderate density of states balances accuracy and robustness. We show that real-world data has De Bruijn structure: Qwen3-14B and Qwen3-32B retain over 75\% of their accuracy on GSM8K, MATH-500 and GPQA-Diamond when attention is restricted to a sliding window less than 15\% as long as the full reasoning trace.

\end{abstract}

\section{introduction}

Large language models (LLMs) are successful at solving problems thought to require precise, step-by-step reasoning~\citep{guoDeepSeekR1IncentivizesReasoning2025a,glazerFrontierMathBenchmarkEvaluating2025,hubertOlympiadlevelFormalMathematical2026,lewkowyczSolvingQuantitativeReasoning2022,jimenezSWEbenchCanLanguage2023}. This is because they can decompose a difficult prediction into a sequence of steps~\citep{weiChainofThoughtPromptingElicits2022,nadgirHowDoesChain2026b}. This inference strategy seems inconsistent with how LLMs are trained: to predict the next token by imitating statistical patterns in the data. They should use ``pattern recognition'' to map ``similar'' inputs to ``similar'' outputs and thereby generalize to unseen data~\citep{sejnowskiDeepLearningRevolution2018}. While a large body of machine learning literature characterizes how generalization is achieved in pattern recognition, there is relatively little literature on why reasoning emerges in LLMs and how it generalizes. It has been argued that reasoning in LLMs is a fundamentally different inference paradigm \citep{prystawskiWhyThinkStep2023a}.

In this paper, we argue that step-by-step reasoning and pattern recognition are not mutually exclusive, they are two ends of a spectrum. When the next token depends on only a small amount of preceding context, LLMs learn to reason step-by-step. Inference in LLMs resembles pattern recognition when the training samples do not have this property.

Our study is motivated by two recent works. \citet{nadgirHowDoesChain2026b} showed that chain-of-thought is successful because it enables the LLM to decompose the overall task into a sequence of smaller classification problems where the classes correspond to the tokens being predicted at each step. In effect, an LLM uses the prompt to traverse a prefix-tree of the set of its reasoning traces. The length of the reasoning trace, say $n$, determines the depth of this prefix-tree. \citet{cagnettaDerivingNeuralScaling2026c} showed that LLMs trained on $D$ tokens of natural language text can reliably capture correlations among only $\sim \sqrt{D}$ nearby tokens. This puts a limit on the length of reasoning traces. For instance, one would need $D \sim 10^{10}$ tokens of in-distribution samples (directly relevant to the task) in order to learn reasoning traces that are of length $n \sim 10^5$. Pre-training datasets are big, $\sim 10^{13}$ tokens, but only a small fraction consists of reasoning traces relevant to a task. So there must be some additional structure that enables LLMs to generalize to newer and longer tasks.

\begin{wrapfigure}[22]{L}{0.3\linewidth}
    \centering
    \includegraphics[width=\linewidth,
    trim={0.25cm 0.5cm 0.7cm 1.1cm}, clip]{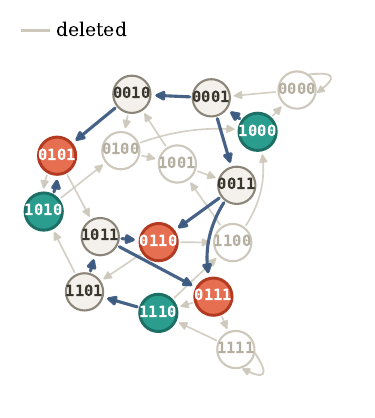}
    \caption{\textbf{De Bruijn graph:} Edges (both bold and gray) show transitions from one 4-gram to another by appending a token. Bold edges form a directed acyclic subgraph (DAG). Green and orange shaded nodes are start and answer nodes in the DAG respectively. Each path corresponds to a reasoning trace.}
    \label{fig:debruijn}
\end{wrapfigure}
Our key technical argument is that, if the set of valid next tokens can be predicted using a limited preceding context, say of length $c$, then we can compress the prefix-tree into a data structure that can be learned with few samples. The LLM transitions from one $c$-length context to another by producing a token. A De Bruijn graph $B(V, c)$ with vocabulary $V$ and context $c$ has one node for each unique $c$-length context and one edge for each valid transition between contexts. \cref{fig:debruijn} gives an example of such a graph with $V =2$ and $c = 4$. Any finite-length prefix-tree with this structure is a directed acyclic subgraph of the De Bruijn graph, henceforth called a ``De Bruijn DAG''. This is because including only forward-pointing edges ensures that all reasoning paths are finite.

\textbf{1. Learning to reason step-by-step is sample efficient when training data has De Bruijn structure.}
We prove that the number of paths required to cover every edge in a De Bruijn DAG is a vanishingly small fraction of the total number of paths. Therefore, if reasoning has De Bruijn structure, then transitions between successive contexts can be learnt with few samples. We also prove that sampling short reasoning traces is sufficient to cover every edge in the DAG. This makes it possible to generalize to longer reasoning problems than those in the training data. We show experimentally that the number of training samples required by a transformer to reproduce a set of reasoning traces scales as a power law of the number of edges in its De Bruijn DAG. Therefore, learning reasoning traces with the De Bruijn structure is indeed sample efficient. In addition, if the data has De Bruijn structure, a transformer trained only on short traces generalizes to much longer ones. The gap between achievable reasoning length at test time and available reasoning length at training time grows with the size of the De Bruijn DAG.

\textbf{2. The degree of De Bruijn structure in the training data determines how much an LLM uses pattern recognition or step-by-step reasoning.} We can induce De Bruijn structure on any reasoning task by training an LLM to maintain a ``state'' that makes the future reasoning trace independent of the past. We fine-tune Qwen2.5-1.5B-Instruct to solve equations using reasoning traces that contain the current state at periodic intervals. More frequent state emissions entails that each succeeding token depends on fewer past tokens (i.e., more De Bruijn structure). We show that LLMs that are trained on data which contains the state more frequently have higher accuracy with fewer samples. They also perform reasoning more faithfully, i.e., when they produce the correct answer, the sequence of reasoning steps is also correct. However, such models are comparatively brittle to answer-preserving perturbations in their chain-of-thought at test-time. They tend to propagate errors in their predicted state into future reasoning steps. LLMs trained on reasoning traces without De Bruijn structure are no more accurate than pattern recognition, i.e., using no reasoning at all. This suggests that an LLM should be trained to use reasoning with a moderate amount of De Bruijn structure to balance accuracy on the task against robustness to unforeseen inputs. Our results generalize to a natural language task where the LLM answers questions about a story.

\textbf{3. Real-world reasoning traces exhibit De Bruijn structure.}
We apply an attention mask to Qwen3-14B and Qwen3-32B so that they can only attend to the prompt and $c$ most recent tokens. Even for a value of $c$ that is below one-tenth of the mean length of the reasoning trace, both models retain at least 90\% of full-context accuracy on GSM8K \citep{gsm8k}, 80\% on MATH-500 \citep{hendrycks2021measuringmathematicalproblemsolving} and 65\% on GPQA-Diamond \citep{reinGPQAGraduateLevelGoogleProof2023}. This, together with the previous two results, suggests that the reasoning traces used to solve these benchmark tasks exhibit some degree of De Bruijn structure, even if the state is not explicitly represented in the traces.

%
\section{learning to reason step-by-step is sample efficient when training data has de bruijn structure}

Suppose that the set of valid next tokens can be predicted using only the most recent $c$ tokens. In this section, we characterize the sample complexity of learning step-by-step reasoning on such data that has De Bruijn structure.

We can represent any set of reasoning traces as a prefix-tree. Any finite-length prefix-tree with De Bruijn structure is a directed acyclic subgraph of the De Bruijn graph $B(V, c)$, called a De Bruijn DAG, or just DAG in short. We can construct an example DAG as follows. Pick a topological ordering $\pi$ of the nodes of $B(V, c)$, mark the last $N \geq 1$ nodes as ``answer'' nodes and the first $S \geq 1$ nodes as the ``start'' nodes. The former correspond to sequences where reasoning traces terminate while the latter correspond to initial sequences. Prune all outgoing edges from answer nodes, all incoming edges into start nodes, all edges $(u,v)$ with $\pi(v) < \pi(u)$ in the topological ordering, and all edges and nodes that do not lie on start-to-answer reasoning traces. Let us call the DAG created by this procedure $D_\pi$. In~\cref{app:debruijn_math}, we show that $D_{\pi}$ can have the maximal number of distinct start-to-answer paths.

We will next characterize the number of samples required to learn all the edges of the DAG. Let $P_{\text{min}}$ be the minimum number of start-to-answer paths required to cover every edge in the DAG and let $P_{\text{tot}}$ be the total number of paths in the DAG. Let $L_{\text{cover}}$ be the minimum length of start-to-answer paths required to cover every edge and let $L_{\text{max}}$ be the length of the longest path in the DAG. We prove the following two theorems in~\cref{app:debruijn_math}.

\begin{theorem}
\label{theorem1}
Let $S,N < \binom{c+m-1}{m-1}$ for some integer $1 < m < V/2$, and let $|E|$ be the number of edges in $D_\pi$. There exists an ordering $\pi$ such that
\beq{
   P_{\text{tot}} \geq \sum_{t=0}^{V-2m} \binom{V-2m}{t}^c
   \quad \text{ and } \quad
   P_{\text{min}} \leq \abs{E} \leq \f{V^c(V+1)}{2}.
\label{eq:ptol_pmin}
}
The ratio $P_{\text{min}}/P_{\text{tot}} \leq \abs{E}/P_{\text{tot}}\to 0$ in the limit $V \to \infty$. Similarly, $P_{\text{min}}/P_{\text{tot}}\leq \abs{E}/P_{\text{tot}} \to 0$ in the limit $c\to\infty$ provided $V$ is sufficiently large.
\end{theorem}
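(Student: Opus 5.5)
The plan is to build $\pi$ from a potential function on contexts, so that forward edges are exactly the token substitutions that increase the potential. Index the vocabulary $\{0,1,\dots,V-1\}$ and let $\phi(w)=\sum_{i=1}^c w_i$ for a context $w=(w_1,\dots,w_c)$. Appending $y$ to $w$ and dropping $w_1$ changes $\phi$ by $y-w_1$. Take $\pi$ to be any ordering by increasing $\phi$, with ties broken arbitrarily. Every edge with $y>w_1$ then points forward in $\pi$. Self-loops and $\phi$-preserving edges may be pruned harmlessly, since the lower bound will use only strictly increasing edges.

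\textbf{Start and answer nodes.} The number of contexts with $\phi(w)\le m-1$ equals the number of nonnegative integer solutions of $w_1+\dots+w_c\le m-1$. That count is $\binom{c+m-1}{c}=\binom{c+m-1}{m-1}$, and all such contexts use only the $m$ "low" tokens $\{0,\dots,m-1\}$. Since $S<\binom{c+m-1}{m-1}$, the first $S$ nodes of $\pi$ all have $\phi\le m-1$, and I will arrange ties so that $0^c$ is among them. Symmetrically, the last $N$ nodes all have $\phi\ge c(V-1)-(m-1)$ and use only the $m$ "high" tokens; I will arrange that $(V-1)^c$ is among them. The remaining $V-2m$ tokens $\{m,\dots,V-m-1\}$ are the "middle" tokens.

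\textbf{Lower bound on $P_{\text{tot}}$.} I count traces of the form $0^c\,s_1s_2\cdots s_{tc}\,(V-1)^c$ in which every $s_j$ is a middle token and $s_{j+c}>s_j$ for all $j$. Write the middle block as $c$ interleaved subsequences (residue classes mod $c$), each of length $t$. The condition says each residue class is a strictly increasing sequence, i.e. a $t$-subset of the $V-2m$ middle tokens. This gives $\binom{V-2m}{t}^c$ choices for each $t$, with $0\le t\le V-2m$.

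Each appended token is larger than the token it evicts:
\begin{itemize}
\item a middle token exceeds the evicted $0$;
\item within the middle block, $s_{j+c}>s_j$;
\item $V-1$ exceeds any evicted middle token.
\end{itemize}
So every step is a forward edge. After the first append, every context contains a middle token and hence has $\phi\ge m$, so the path never revisits a start node. Similarly, it does not enter an answer node before the final context, because any earlier context contains a non-high token and so has $\phi$ below the answer threshold. I will check this carefully, and it is where the hypothesis $m<V/2$ (a nonempty middle) enters. Distinct token sequences give distinct paths, so summing over $t$ yields the stated bound.

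\textbf{Upper bounds.} By construction of $D_\pi$, every surviving edge lies on some start-to-answer path. Choosing one such path per edge covers all edges, so $P_{\text{min}}\le|E|$. For the bound on $|E|$: $B(V,c)$ has $V^{c+1}$ edges, and $V$ of them are self-loops $a^c\to a^c$, which are removed. The remaining edges between distinct nodes $u\ne v$ come in possible antiparallel pairs, and at most one edge of each pair survives the topological pruning. Hence $|E|\le (V^{c+1}-V)/2\le V^c(V+1)/2$.

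\textbf{Limits.} Set $n=V-2m$ and fix $m=2$. By the power-mean inequality over the $n+1$ terms,
\[
\sum_{t=0}^n\binom{n}{t}^c\ \ge\ (n+1)\Big(\tfrac{2^n}{n+1}\Big)^c=\frac{2^{nc}}{(n+1)^{c-1}}.
\]
It follows that
\[
\frac{|E|}{P_{\text{tot}}}\ \le\ \frac{V^c(V+1)(V-3)^{c-1}}{2\cdot 2^{(V-4)c}}.
\]
For fixed $c$, this tends to $0$ as $V\to\infty$ because of the exponential in $V$. For fixed $V$ with $2^{V-4}>V(V-3)$, the bound is of the form $\mathrm{const}\cdot\rho^c$ with $\rho<1$, so it tends to $0$ as $c\to\infty$; this is the "sufficiently large $V$" condition. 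Since the hypothesis $S,N<\binom{c+1}{1}=c+1$ for $m=2$ is needed, in the $c\to\infty$ limit it only gets weaker. For the $V\to\infty$ limit one may instead take $m$ as given, as long as $V-2m\to\infty$.

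The main obstacle I expect is the bookkeeping that the counted paths really survive the pruning. This requires tie-breaking so that start and answer sets sit exactly at the extremes of $\pi$ and contain $0^c$ and $(V-1)^c$, and that no intermediate context is a start or answer node.
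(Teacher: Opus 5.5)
Most of your proposal is the paper's proof. You use the same digit-sum order. You count the same family $0^c s_1\cdots s_{tc}(V-1)^c$, with strictly increasing residue classes, for the bound on $P_{\text{tot}}$. Your one-path-per-edge argument for $P_{\text{min}}\le|E|$ is a cleaner version of the paper's argument, which uses the private edges of a minimum cover.

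The genuine gap is your derivation of $|E|\le V^c(V+1)/2$. The halving step assumes the non-loop edges of $B(V,c)$ come in antiparallel pairs, but for $c\ge 2$ almost none do. The edge $a_1\cdots a_c\to a_2\cdots a_c b$ has a reverse only when the string $a_1\cdots a_c b$ has period $2$. So there are just $(V^2-V)/2$ antiparallel pairs among the $V^{c+1}-V$ non-loop edges. Discarding one edge per pair therefore gives nothing near $(V^{c+1}-V)/2$, and nothing forces a topological order to discard half the edges of a non-symmetric digraph. Your argument only works for $c=1$, where $B(V,1)$ is the complete digraph.

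The repair uses the specific order you built, as the paper does. An edge $u=a_1\cdots a_c\to v=a_2\cdots a_c b$ can survive only if $\pi(u)<\pi(v)$. For an order by $\phi$ this forces $\phi(v)\ge\phi(u)$, i.e.\ $b\ge a_1$. There are $V^{c-1}\cdot V(V+1)/2=V^c(V+1)/2$ such $(c+1)$-strings.

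This count must include the ties $b=a_1$, contrary to your remark that $\phi$-preserving edges ``may be pruned harmlessly''. You do not get to choose this: $D_\pi$ is fixed by the total order $\pi$. A rotation edge $a_1a_2\cdots a_c\to a_2\cdots a_c a_1$ between distinct nodes survives whenever the tie-break orients it forward and it lies on a start-to-answer path.

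With that count in place the rest goes through. Your power-mean bound $\sum_t\binom{n}{t}^c\ge 2^{nc}/(n+1)^{c-1}$ is a valid, Stirling-free replacement for the paper's central-term estimate. Specializing to $m=2$ is unnecessary: with $n=V-2m$ for the given $m$, the $c\to\infty$ limit holds whenever $2^{V-2m}>V(V-2m+1)$.

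One minor slip: for $t=0$ the intermediate contexts $0^{c-j}(V-1)^j$, $1\le j\le c-1$, contain no middle token. They are still neither start nor answer nodes, because each contains both a $0$ and a $V-1$.
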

\begin{theorem}
\label{theorem2}
Let $S,N < \binom{c+m-1}{m-1}$ for some integer $1 < m < V/2$. There exists an ordering $\pi$ such that
\beq{
   L_{\text{max}} \geq c(V-2m+1)\quad \text{ and }\quad
   L_{\text{cover}} \leq 2c+1.
\label{eq:lmax_lcover}
}
In the limit $V\to \infty$, the ratio $L_{\text{cover}}/L_{\text{max}} \to 0$.
\end{theorem}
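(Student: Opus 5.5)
We will take $\pi$ to be the ordering of contexts by token sum. Identify the vocabulary with $\{0,1,\dots,V-1\}$ and order each context $x=(x_1,\dots,x_c)$ by its sum $\sigma(x)=\sum_i x_i$, breaking ties lexicographically, with larger entries in later positions placed later. This is a total order. An edge of $B(V,c)$ appends a token $a$ and drops the leading token $b$, so it changes $\sigma$ by $a-b$; it survives pruning only if it goes forward in $\pi$. The number of contexts with $\sigma\le m-1$ equals the number of nonnegative integer vectors of length $c$ with sum at most $m-1$, which is $\binom{c+m-1}{c}=\binom{c+m-1}{m-1}$. The hypothesis $S,N<\binom{c+m-1}{m-1}$ therefore says the following: every start node has $\sigma\le m-1$, and by the symmetry $x\mapsto (V-1)-x$, every answer node has $\sigma\ge c(V-1)-(m-1)$. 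Also the all-zero context is the first node of $\pi$, hence a start node, and the all-$(V-1)$ context is the last node, hence an answer node.

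\textbf{Truncation lemma.} We first record a fact used for both bounds. Suppose $\gamma$ is a walk in $B(V,c)$ that is increasing in $\pi$, begins at a start node and ends at an answer node. Truncate $\gamma$ at its last start node and at its first answer node. The result is a start-to-answer path that uses only unpruned edges: no edge enters a start node or leaves an answer node, and every edge is forward. It therefore lies in $D_\pi$.

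\textbf{Lower bound on $L_{\text{max}}$.} We use the ``rolling increment'' walk. From the constant context $(k,\dots,k)$, append $k+1$ a total of $c$ times to reach $(k+1,\dots,k+1)$. Each step drops a $k$ and appends a $k+1$, so $\sigma$ rises by exactly $1$. Hence all nodes on the walk are distinct and the walk is forward in $\pi$. Concatenating these segments from all-$0$ to all-$(V-1)$ gives a walk of length $c(V-1)$ with $\sigma$ taking each value $0,1,\dots,c(V-1)$ once. By the truncation lemma we cut it at the last node with $\sigma\le m-1$ and at the first node with $\sigma\ge c(V-1)-m+1$. What remains has length at least $c(V-1)-2(m-1)\ge c(V-2m+1)$, which gives the first inequality.

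\textbf{Upper bound on $L_{\text{cover}}$.} Take any edge $(u,v)$ of $D_\pi$, with $u=(x_1,\dots,x_c)$ and $v=(x_2,\dots,x_c,a)$. We build a walk in three pieces:
\begin{enumerate}
\item Starting from all-$0$, append $x_1,\dots,x_c$ to reach $u$ in $c$ steps. Each step drops a $0$ and appends $x_i\ge 0$, so $\sigma$ is nondecreasing. When $\sigma$ is unchanged (that is, $x_i=0$) we must check that the lexicographic tie-break still moves forward; this is where the tie-break rule has to be chosen with care.
\item Take the edge $(u,v)$ itself.
\item Append $V-1$ a total of $c$ times to reach all-$(V-1)$. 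This is nondecreasing in $\sigma$ by the mirror argument.
\end{enumerate}
The truncation lemma yields a start-to-answer path of length at most $2c+1$ that still contains $(u,v)$. Indeed, $u$ cannot be an answer node and $v$ cannot be a start node, since $(u,v)$ is an edge of $D_\pi$; so any truncation removes nodes only before $u$ and only after $v$. A family of such paths, one per edge, covers $E$, which gives $L_{\text{cover}}\le 2c+1$. The limit $L_{\text{cover}}/L_{\text{max}}\le (2c+1)/(c(V-2m+1))\to 0$ as $V\to\infty$ follows immediately for fixed $m$ and $c$.

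\textbf{Main obstacle.} The delicate step is making the zero-increment steps in pieces 1 and 3 strictly forward under the tie-breaking rule, since a step with $\sigma$ unchanged is a cyclic shift of the context. If the lexicographic tie-break fails for some shifts, we would try the following fallbacks in order. First, start from all-$0$ but reach $u$ through a path that inserts nonzero tokens earlier. Second, replace $\pi$ by an order on $(\sigma,\text{position-weighted sum})$, which changes strictly under any non-trivial shift. In either case we must check that the counting of start and answer nodes, which drives the $\binom{c+m-1}{m-1}$ bound, is unaffected. It is unaffected, because only the tie-break within a fixed $\sigma$-level changes.
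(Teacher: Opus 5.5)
Your node counting and your bound on $L_{\text{max}}$ are correct. The rolling walk is strictly increasing in $\sigma$, and after truncation its length is at least $c(V-1)-2(m-1)$, which exceeds $c(V-2m+1)$ by $2(c-1)(m-1)$. The gap is in the $L_{\text{cover}}$ bound, at the step you flagged, and choosing a tie-break cannot close it. The zero-increment steps of piece 1 are rotations $0d\mapsto d0$. Those of piece 3 are rotations $(V-1)d\mapsto d(V-1)$. All of them stay inside one $\sigma$-level. Under your rule, every rotation $0d\mapsto d0$ with $d\neq 0^{c-1}$ goes backward: for $c=2$ you place $01$ after $10$, so $00\to 01\to 10$ is not increasing. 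Your truncation lemma therefore does not apply to piece 1. Under the opposite rule, which the paper uses, piece 1 is increasing, but every rotation $(V-1)d\mapsto d(V-1)$ with $d\neq (V-1)^{c-1}$ goes backward. No ordering of a $\sigma$-level makes all these rotations forward. Indeed, $0^{c-1}(V-1)\to 0^{c-2}(V-1)0\to\cdots\to (V-1)0^{c-1}\to 0^{c-1}(V-1)$ is a cycle of $c-1$ zero-rotations and one $(V-1)$-rotation. Your second fallback fails for the same reason: a rotation $ad\mapsto da$ changes $\sum_i i\,x_i$ by $ca-\sigma$. This is negative for $a=0$ and positive for $a=V-1$, so whichever direction you order by, one of pieces 1 and 3 goes backward.

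The missing idea is to use the pruning. Since $(u,v)$ is an edge of $D_\pi$, it lies on some start-to-answer path of $D_\pi$. So you can bound the length of an existing half instead of constructing both halves.

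The paper takes the tie-break under which piece 1 is increasing: among equal $\sigma$, the node with the smaller entry at the last differing position comes later. Then no edge of $D_\pi$ drops a $V-1$, since such an edge either lowers $\sigma$ or is a backward rotation. Suppose $v$ contains $V-1$, first at position $i_1$. Then every path in $D_\pi$ from $v$ to an answer node is shorter than $i_1\le c$, and pruning guarantees such a path exists. If $v$ has no $V-1$, your piece 3 strictly increases $\sigma$. Concatenating the truncated piece 1, the edge $(u,v)$, and this path gives a $\pi$-increasing start-to-answer walk of length at most $2c+1$. Such a walk is a path of $D_\pi$.

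With your tie-break the mirror argument also works, but it too needs the reachability that pruning supplies, which your proposal never uses. Under your rule no edge of $D_\pi$ appends a $0$. So every zero of $u$ is already present in a start node $w$ that reaches $u$ in $D_\pi$. Let $z$ be the position of the last zero of $u$. This forces the piece-1 node $0^{c-z}x_1\cdots x_z$ to be a start node: either its $\sigma$ is smaller than $\sigma(w)$, so it precedes $w$ in $\pi$, or it equals $w$. Truncating at that node then removes every backward step.
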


\begin{figure}[!b]
\centering
\includegraphics[width=0.4\linewidth]{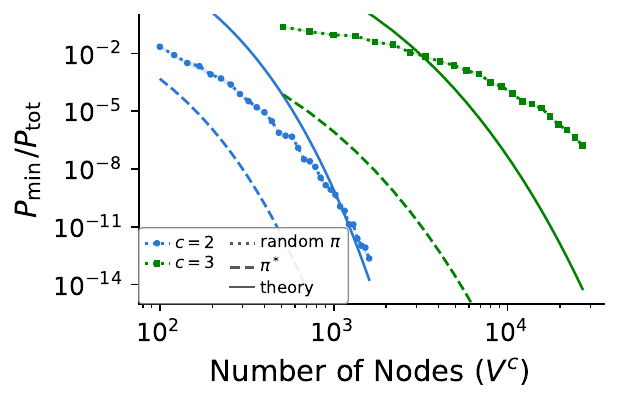}
\includegraphics[width=0.4\linewidth]{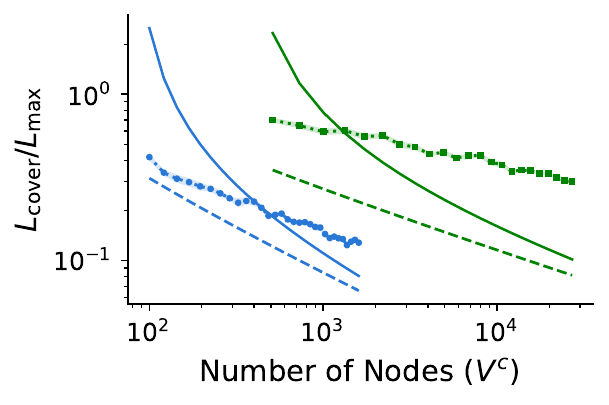}
\caption{\textbf{Data with De Bruijn structure can be learned with few, short traces}. We created a synthetic dataset of De Bruijn graphs and DAGs for different values of $c$ and $V$. 
\textbf{Left:} The minimum number of start-to-answer paths required to cover every edge in the DAG $P_{\text{min}}$ shrinks relative to the total number of paths $P_{\text{tot}}$ as the number of nodes grows.
\textbf{Right:} The minimum length of paths to cover every edge $L_{\text{cover}}$ shrinks relative to the longest path $L_{\text{max}}$.
\textbf{Details:} DAGs were created with $S=5$ start nodes, $N=10$ answer nodes. Dotted curves denote an average over random orderings for nodes in  the DAG. Solid curves correspond to using a particular ordering $\pi^*$ in the proof of \cref{theorem1,theorem2}. We computed $P_{\text{min}}$ using a min-flow problem and $L_{\text{cover}}$ as the maximum over edges for the length of the shortest path from any start node to any answer node containing that edge. For random $\pi$, curves show the mean ratio over random orderings and its standard error over ten DAGs (too small to see).
}
\label{fig:theory}
\end{figure}

Let us discuss some intuition behind these two results. The degree of the nodes in the De Bruijn graph $B(V, c)$ is $V$. The degree of the De Bruijn DAG is less than $V$ because of the pruned edges. Since $L_{\text{max}} \geq c (V - 2m + 1)$ in \cref{theorem2}, the total number of paths in the DAG is $V^{c (V - 2m + 1)}$ minus the number of pruned paths. If we ignore the pruned paths (for some intuition), the total number of paths is exponential in $V$. Since $L_{\text{cover}} \leq 2c + 1$, the minimum number of paths to cover all edges in the DAG is at most $V^{2c + 1}$, polynomial in $V$. This is why the ratio $P_{\text{min}}/P_{\text{tot}} \to 0$ as $V \to \infty$. In addition, for a large enough value of $V > 2m + 1$, we should expect the ratio $P_{\text{min}}/P_{\text{tot}} \to 0$ as $c \to \infty$. The proof of \cref{theorem1,theorem2} makes this intuition rigorous. \cref{theorem2} shows that it is sufficient for the training data to contain rather short reasoning traces. Altogether, these two results show that if reasoning traces have a De Bruijn structure, then an LLM with a large vocabulary can learn to reason step-by-step with short and few reasoning traces. These theorems are verified numerically in \cref{fig:theory}.

\begin{figure}[!htpb]
\centering
\includegraphics[width=0.4\linewidth]{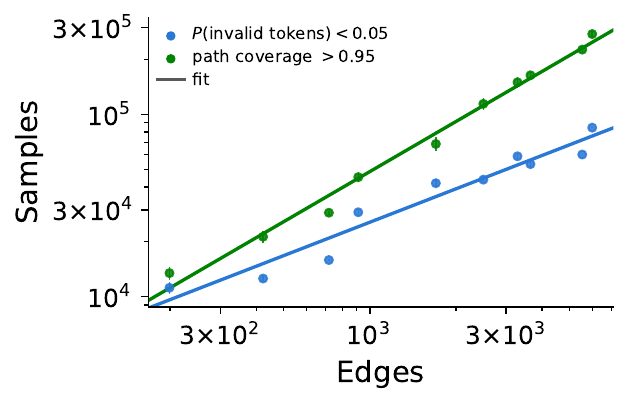}
\includegraphics[width=0.4\linewidth]{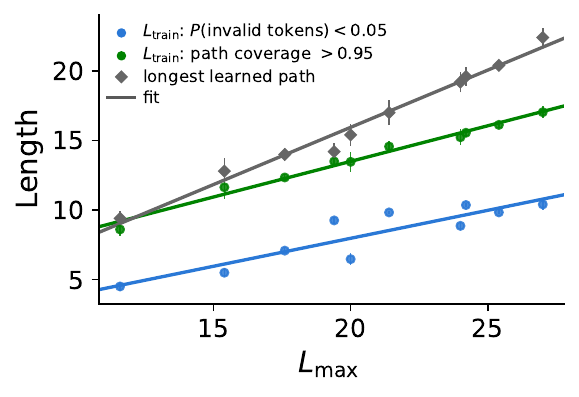}
\caption{\textbf{Transformers can learn paths on a De Bruijn DAG with few and short samples}.
We created a synthetic dataset of De Bruijn graphs and DAGs for different values of $c$ and $V$, and trained transformers on start-to-answer paths from 5 replicate DAGs for each configuration (random $\pi$, error bars are standard error). The green curve indicates the sample complexity (left) or the length of reasoning traces in the training data (right) required for achieving 95\% path coverage, i.e., when the trained transformer assigns probability greater than 10\% of the probability under the uniform distribution $1/P_{\text{tot}}$ to $>$95\% of paths. The blue curve corresponds to the situation where the trained transformer assigns a probability smaller than 5\% to invalid next-tokens that do not lie on valid reasoning paths in the DAG.
\textbf{Left:} The number of training samples required to reproduce paths of the DAG scales as a power law in the number of edges ($\sim x^{0.61}$ for the blue curve and $\sim x^{0.91}$ for the green curve).
\textbf{Right:} The length of reasoning traces in the training data scales  linearly in $L_{\text{max}}$ and is smaller than $L_{\text{max}}$. The gap between $L_{\text{train}}$ and the longest covered path grows with DAG size. Details in~\cref{app:synth_exp}.
}
\label{fig:synthetic_exp}
\end{figure}

\begin{remark}[\textbf{What is the ideal vocabulary size?}]
Since both ratios $P_{\text{min}}/P_{\text{tot}}$ and $L_{\text{cover}}/L_{\text{max}}$ vanish as $V \to \infty$, it may seem that large vocabulary sizes are ideal for learning step-by-step reasoning. This is counter to the empirical preference for intermediate vocabulary sizes that are calibrated to model size and the amount of training data \citep{taoScalingLawsVocabulary2024}. This discrepancy is resolved by considering the complexity of a next-token prediction. The out-degree of any node in $D_\pi$ increases with vocabulary size $V$, forcing the LLM to distinguish between a large number of valid next tokens at any step~\citep{nadgirHowDoesChain2026b}. The optimal vocabulary size is therefore governed by a trade-off between maximizing the sample efficiency of reconstructing the reasoning paths (favoring a large $V$) and minimizing errors when selecting the correct path (favoring a small $V$).
\end{remark}

In~\cref{fig:synthetic_exp} (left), we show using transformers trained on synthetic data, that the number of samples required to learn the paths in the DAG is a power law in the number of edges in the DAG. \cref{theorem1} shows that the number of edges $\abs{E}$ in the DAG scales significantly slower than the total number of paths $P_{\text{tot}}$. Together, these two results suggest that learning reasoning paths with De Bruijn structure becomes more sample efficient as the size of the DAG increases. In addition,~\cref{fig:synthetic_exp} (right) shows that models assign non-trivial probability to paths that are significantly longer than anything seen in the training data. This is consistent with the result in~\cref{theorem2}.
\begin{wrapfigure}[44]{L}{0.4\linewidth}
\includegraphics[width=\linewidth]{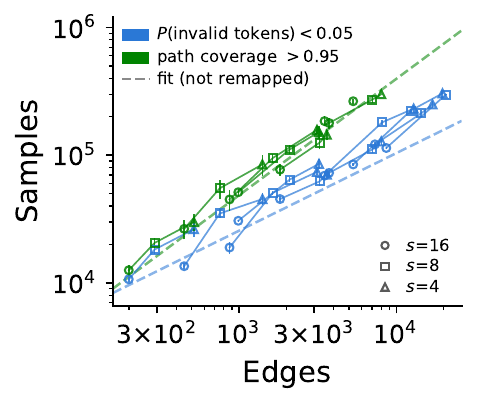}
\caption{\textbf{Sample complexity of learning remapped reasoning traces}.
Connected line segments correspond to remapping with block lengths $s = 4, 8, 16$; there are 10 different DAGs drawn on this plot. Dashed lines indicate the fit obtained in \cref{fig:synthetic_exp} for the original reasoning traces. Smaller values of $s$ have a larger number of edges and sample complexity. Learning on remapped traces follows a similar power-law trend as seen in \cref{fig:synthetic_exp}.}
\label{fig:remapped}
\begin{framed}
\footnotesize
\noindent \textbf{Task:} solve for x: -4*(2*x + 13) + 10 = -122. output your answer after \#\#\#\#.\\

\noindent \textbf{No state emission ($k = \infty$):} subtract 10 from both sides. divide both sides by -4. subtract 13 from both sides. divide both sides by 2. \#\#\#\# 10.\\

\noindent \textbf{Frequent state emission ($k=1$):} subtract 10 from both sides. current equation is -4*(2*x + 13) = -132. \textcolor{red}{add 5 to both sides}. current equation is -4*(2*x + 13) + 5 = -127. subtract 5 from both sides. current equation is -4*(2*x + 13) = -132. divide both sides by -4. current equation is 2*x + 13 = 33. subtract 13 from both sides. current equation is 2*x = 20. divide both sides by 2. current equation is x = 10. \#\#\#\# 10.
\end{framed}
\caption{\textbf{An equation-solving task.} The red text is a perturbation inserted at inference-time. The step-by-step reasoning model ($k=1$) adapted to this perturbation by undoing the operation in the next step.}
\label{fig:eqtask}
\end{wrapfigure}

\textbf{De Bruijn structure must be explicitly represented in the data to achieve low sample complexity.}
Some reasoning traces might not contain explicit De Bruijn structure but have some other hidden structure instead. A transformer may be able to learn these traces with few samples if it learns the hidden structure instead of the explicit next-token transitions. We design an experiment in \cref{fig:remapped} to isolate this phenomenon. Consider the set of start-to-answer paths on a De Bruijn DAG. Partition each path into consecutive blocks of $s$ tokens and, within each block, permute the elements of the vocabulary by a fixed, block-specific permutation. For example, sequences 1324 and 4344 with vocabulary size $V=4$ are remapped to 1321 and 4311 respectively if the block size $s = 2$ via the remapping $4 \to 1$, $2 \to 2$ in the second block. All reasoning traces in the dataset are permuted by the same set of block-specific permutations. This remapping is a bijection, so it preserves the underlying structure of the DAG. However, the structure is no longer explicit in the tokens because the permutations depend on position (even if the permutations are consistent across all traces).

Before remapping, a De Bruijn DAG is a highly compressed representation of the prefix-tree of reasoning paths. This is no longer true after remapping; each edge in the original DAG needs to be repeated in the remapped DAG if it appears in different blocks. The new DAG therefore has many more edges than the original one. If the transformer learns the underlying De Bruijn structure from the remapped sequences, it should only require as many samples as the original problem. If, instead, the transformer relies on the structure explicit in the trace, then sample complexity should scale with the edges in the remapped DAG. In \cref{fig:remapped}, we see the latter situation. This indicates that the number of samples a transformer needs to learn reasoning traces is small if there is explicit De Bruijn structure in the data; implicit structure may not lead to improvements in sample complexity. This is consistent with \cref{theorem1,theorem2} as well as the empirical observations of~\citet{vafa2024evaluating}.

\section{the degree of de bruijn structure in the training data determines how much an LLM uses pattern recognition or step-by-step reasoning}

We can induce De Bruijn structure in the reasoning traces for any task by training an LLM to maintain a ``state'' that makes future reasoning conditionally independent of the past. When states are frequently interspersed throughout the reasoning trace, reasoning has De Bruijn structure with a small context window $c$. Infrequently observed states correspond to a larger $c$, i.e., less De Bruijn structure. We will discuss two types of reasoning tasks where we train LLMs on data with different frequencies of state emission.

\paragraph{Equation-solving task}
We fine-tuned Qwen2.5-1.5B-Instruct~\citep{qwenQwen25TechnicalReport2025} to solve single-variable equations like the one in \cref{fig:eqtask}. A baseline model was fine-tuned to solve equations without reasoning. Other models were trained to emit a state once every $k\in \{1, \dots, 7, \infty\}$ ``operations'' in their reasoning. The $k = \infty$ case corresponds to no state emissions. Operations manipulate the initial equation towards a simplified state and are written explicitly in the reasoning trace, e.g., ``add 5 to both sides''. A state is the new equation after applying a sequence of operations to the previous state. We measured the accuracy of the final answer and the correctness of the intermediate reasoning for models trained with different state-emission intervals $k$. We also tested the robustness of trained models to perturbations in their reasoning. With probability $p\in \{0.05,0.1, 0.2,0.3\}$, we inserted a randomly chosen operation in place of the model's next predicted operation during decoding. The model then continued decoding with this random operation in its reasoning trace. Such perturbations force a step-by-step reasoning model to predict a different sequence of states but they do not change the final answer. A step-by-step reasoner must use extra operations to undo the random operation. See \cref{fig:eqtask} for an example.

\begin{wrapfigure}[10]{L}{0.45\linewidth}
\centering
\Large
\begin{adjustbox}{width=\linewidth}
\renewcommand{\arraystretch}{1.2}
\begin{tabular}{lrrr}
\toprule
State emission & Perturbation & Total number of & Fraction correct w/\\
interval & probability & correct answers & perturbations\\
\midrule
$k=1$ & 0.2 & 333 & 26.4\% \\
$k=1$ & 0.3 & 190 & 34.2\% \\
$k=\infty$ & 0.2 & 473 & 65.3\% \\
$k=\infty$ & 0.3 & 428 & 79.4\% \\
\bottomrule
\end{tabular}
\end{adjustbox}
\caption{The proportion of problems where the model predicted the correct answer despite perturbed reasoning.}
\label{tab:correct_prop}
\end{wrapfigure}
As \cref{fig:math_eval_acc} shows, models which emit the state after every step ($k=1$) can predict the answer more accurately with fewer training samples compared to those which emit the state less frequently. However, the $k=1$ models are less accurate than the $k=\infty$ models when perturbations are introduced, so frequent state-emissions cause models to be less robust to perturbations. The results in \cref{tab:correct_prop} support this conclusion: in the majority of cases where the $k=1$ model predicts the correct answer, there were no perturbations applied. To understand why models trained with $k = \infty$ are more robust than ones trained with $k = 1$, we can check the correctness of their reasoning traces. \cref{fig:math_eval_trends} (left) shows that across all perturbation probabilities, smaller emission intervals result in more correct reasoning steps. The discrepancy between trends in correctness of the reasoning and the accuracy of the final answer means that models with larger state-emission intervals can predict correct final answers despite incorrect reasoning. This finding can be explained by our theoretical arguments, which state that models that reason with less De Bruijn structure (large $c$) need to use statistical pattern recognition across their global context to predict the next token instead of step-by-step reasoning. This is because they do not densely sample all the edges of the DAG in their training data. There are three independent observations that support our explanation above.

\begin{figure}[!htpb]
\centering
\includegraphics[width=0.4\linewidth]{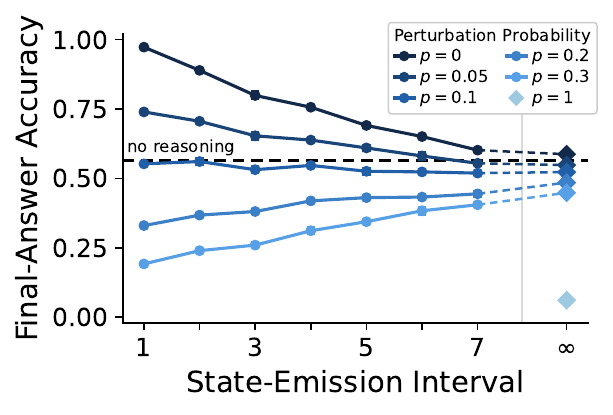}
\includegraphics[width=0.4\linewidth]{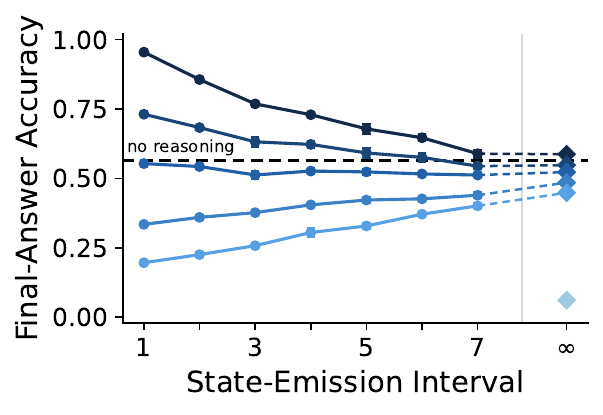}
\caption{\textbf{Reasoning with more frequent state emissions results in greater accuracy with fewer samples, but such reasoning is less robust to external perturbations.} \textbf{Left:} Accuracy of the final answer for different emission intervals and perturbation probabilities on a held-out test set. A different model was trained for each state-emission interval. Each model was trained on 50,000 samples. \textbf{Right:} Accuracy when models are trained with a fixed number of tokens is nearly identical to when models are trained with a fixed number of samples (left). Configurations corresponding to $k=\infty$ and those with no reasoning get 50,000 training samples, while smaller state-emission intervals get fewer samples because their reasoning traces contain more tokens per sample. For example, $k=1$ was trained on only 24,536 samples. In both left and right panels, without perturbation, using a small emission interval $k=1$ is optimal. With perturbation, the trend flips and the $k=\infty$ condition is optimal. Error-bars are standard error of the mean over three replicate experiments with different sets of train and test samples. They are mostly too small to see in the plot. Details in~\cref{app:interp}.}
\label{fig:math_eval_acc}
\end{figure}

First,~\cref{fig:math_eval_acc} shows that the $k=\infty$ reasoning traces contribute almost nothing to the final-answer accuracy compared to direct prediction. In~\cref{fig:math_eval_acc} (left), a model trained to predict without reasoning obtains 0.564 accuracy, which is almost the same as that of the model with $k=\infty$ (0.587). This may seem surprising in light of evidence that LLMs obtain better accuracy when they use reasoning. Our finding is not inconsistent with this evidence. We are suggesting that reasoning is more accurate than direct prediction if there is some De Bruijn structure in the training data. Indeed, reasoning with state-emission interval $k=1$ has a significantly better accuracy of 0.973 in~\cref{fig:math_eval_acc} (left).

Second, we checked that the proportion of correct answers with incorrect reasoning increases monotonically with the state-emission interval $k$ and is maximized at $k=\infty$. When there are no perturbations, models that emit the state frequently ($k=1$) do not produce correct answers via incorrect steps. For all perturbation probabilities $p$, the fraction of correct answers that are produced by incorrect reasoning is at most 3\%. This is in contrast with the behavior of the model that does not emit the state at all. When there are no perturbations, 16.6\% of correct answers have incorrect reasoning while, with perturbations, this number can be as high as 35\%. Reasoning without state emissions is thus only loosely coupled to the final answer. This explains the discrepancy between trends in the derivation accuracy in~\cref{fig:math_eval_trends} (left) and the final-answer accuracy in~\cref{fig:math_eval_acc} (left).

\begin{figure}[!htpb]
\centering
\includegraphics[width=0.4\linewidth]{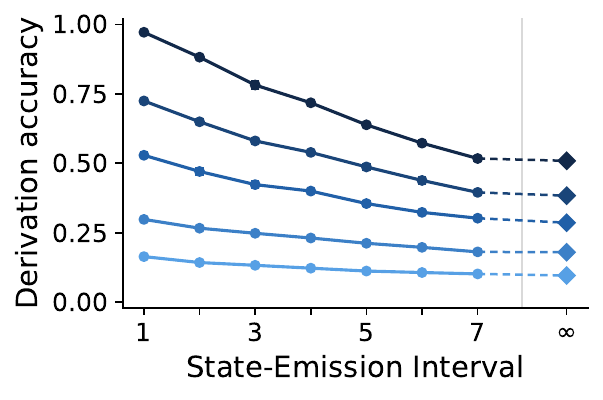}
\includegraphics[width=0.4\linewidth]{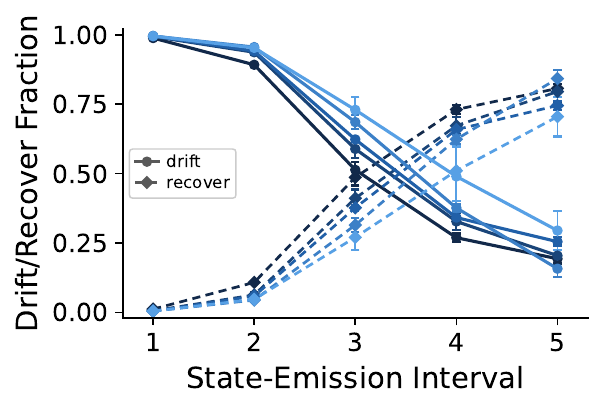}
\caption{\textbf{Frequent state emissions result in faithful, step-by-step reasoning instead of pattern recognition.} \textbf{Left:} For all perturbation probabilities, correctness of the intermediate sequence of reasoning steps (the ``derivation accuracy'') decays as the state-emission interval grows. When we compare to the trends in final-answer accuracy shown in~\cref{fig:math_eval_acc}, we see that models with large state-emission intervals tend to get the correct answer with incorrect derivations when their reasoning is perturbed. \textbf{Right:} Given that a model makes a mistake in a state prediction, future predictions can either ``drift'' or ``recover''. A model drifts when its current state is correct conditioned on the recent past context (up to the previous state) but incorrect given the entire past sequence of operations (ignoring previous states). This means it propagated an error from a previous state. It recovers if the opposite is true, i.e., when it corrects a mistake in a previous incorrect state. As the emission interval increases, it is more likely for a state prediction to recover than drift into further incorrect states. Both curves add to one. Error-bars are standard error over three replicate experiments with different sets of train and test samples. They are mostly too small to see in the plot. All models were trained on 50,000 samples. Details in~\cref{app:interp}}
\label{fig:math_eval_trends}
\end{figure}

And third, for large state-emission intervals $k$, the model stops deriving each intermediate equation (state) from the one before it. \cref{fig:math_eval_trends} (right) shows that at small $k$, an error in one state is inherited by later states, i.e., its reasoning drifts. As $k$ increases, the model is more likely to recover from an incorrect previous state, i.e., an error in one state emission is corrected in future states.

Together, our results show that for small $k$, each state is conditioned on primarily the most recent tokens after the previous state emission. The final answer is a result of this sequence of local steps. For larger $k$, each state and the final answer depends on learned patterns from the entire global context. This is why reasoning is less robust but more accurate and faithful for small $k$ (more De Bruijn structure), and it is more robust but not as accurate for large $k$ (less De Bruijn structure).

\begin{remark}
We also explored two alternative hypotheses about why configurations with larger state-emission intervals $k$ are more robust to perturbations compared to those with smaller state-emission intervals (\cref{fig:math_eval_acc} and \cref{fig:math_eval_trends}). For example, it is possible that these large $k$ conditions reason step-by-step but skip over intermediate perturbations. Or perhaps, the LLM implicitly replaces perturbations with the corresponding correct step. To test these, we also computed the ``ignored accuracy'' and the ``replaced accuracy''. They are computed in a similar fashion to the derivation accuracy except that the ignored accuracy does not take into account any steps that were added due to perturbation, and the replaced accuracy replaces steps due to perturbation with the canonically correct step. \cref{fig:ignored_replaced} shows that neither accuracy measure recovers the trend seen in the final-answer accuracy.
\end{remark}

\begin{wrapfigure}[17]{R}{0.4\linewidth}
\vspace*{-2ex}
\begin{framed}
\footnotesize
\noindent Story: Nicholas entered the hostel dorm. Julia entered the hostel dorm. Nicholas moved the harmonica to the canvas bag, which is also located in the hostel dorm. Nicholas left the hostel dorm. Julia moved the harmonica to the leather pouch, which is also located in the hostel dorm.

\noindent Question: In which container will Nicholas search for the harmonica?
\end{framed}
\caption{\textbf{An example story and question.} Answering the question requires tracking the beliefs of Nicholas and Julia about the location of the harmonica. The actual questions used for evaluation were much more substantial than this, with 20 actions and multiple objects to track.}
\label{fig:storytask}
\end{wrapfigure}

\paragraph{Story task}
We next fine-tune Qwen2.5-1.5B-Instruct \citep{qwenQwen25TechnicalReport2025} on a natural language task where the model must answer questions about stories. The stories, questions and answers are algorithmically generated based on the approach by~\citet{sclarExploreTheoryMind2024}. To answer these questions using step-by-step reasoning, the model must correctly track the beliefs of different people about certain objects and how their actions change these beliefs. An example task is shown in~\cref{fig:storytask}. We studied models that are (i) not fine-tuned (default reasoning), (ii) fine-tuned to produce the final answer directly, (iii) trained to emit a short reasoning trace describing only the key sentences in the story relevant to the final answer, (iv) trained to restate every step in the story followed by the current state of all relevant variables (people, their beliefs, relevant objects). We also studied a length-matched control to the state-tracking models in (iv) where models restate each line of the story and, instead of producing the state afterwards, only describe the local changes implemented by each step. As in the previous section, we created data with the different state-emission intervals. \cref{app:story} provides more examples and details.

\begin{figure}[!btph]
\centering
\includegraphics[width=0.4\linewidth]{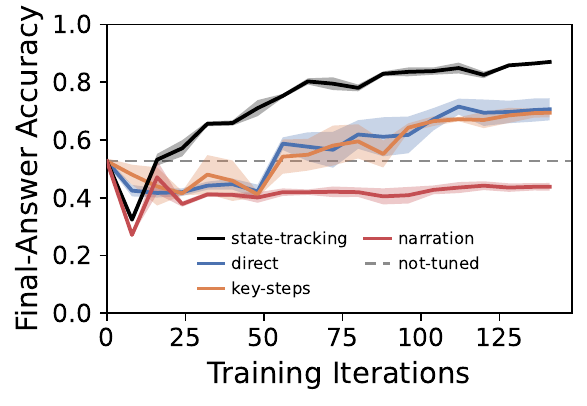}
\includegraphics[width=0.4\linewidth]{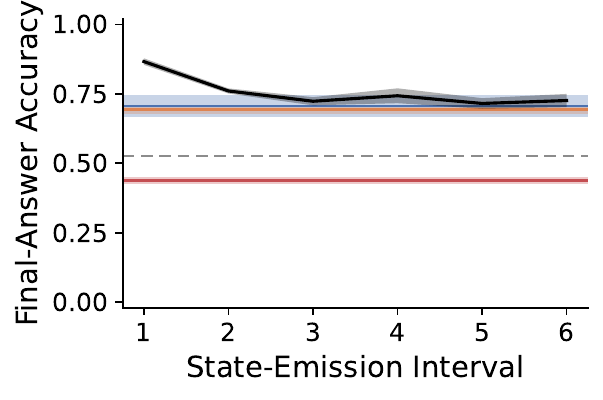}
\caption{\textbf{Reasoning with frequent state-emissions is more accurate than controls and reasoning with fewer state-emissions}. \textbf{Left:} The accuracy of the final answer increases steadily during fine-tuning. When the model emits every intermediate state (state-tracking), it performs better than the other configurations. \textbf{Right:} As the state-emission interval increases, accuracy declines steadily to be the same as not using any reasoning at all. Shading denotes the standard error over three replicate experiments. These trends are consistent with those in~\cref{fig:math_eval_acc} on the equation-solving task.}
\label{fig:tom_results}
\end{figure}
We find that reasoning with state-tracking results in higher accuracy than all other configurations (\cref{fig:tom_results}, left). Narration models where reasoning traces are roughly the same length as those of state-tracking  perform the worst, 0.438 accuracy for the former with a median reasoning length of 648 versus 0.866 for the latter with a median reasoning length of 852. Reasoning with key sentences obtains an accuracy of 0.694 with a median reasoning length of only 38 tokens. This indicates that our findings are not just an artifact of increasing the reasoning length by including more states. In addition, as the state-emission interval is increased, the accuracy deteriorates to that of no reasoning at all (\cref{fig:tom_results}, right). This supports the conclusion drawn from the equation-solving task in~\cref{fig:math_eval_acc}, where models with larger state-emission intervals infer next tokens using pattern recognition rather than step-by-step reasoning.

\section{real-world reasoning traces exhibit de bruijn structure.}
\begin{figure}[!btph]
\centering
\includegraphics[width=0.325\linewidth]{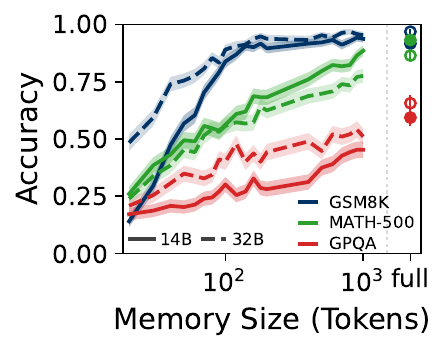}
\includegraphics[width=0.325\linewidth]{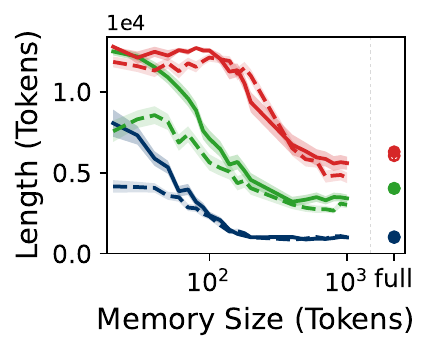}
\includegraphics[width=0.325\linewidth]{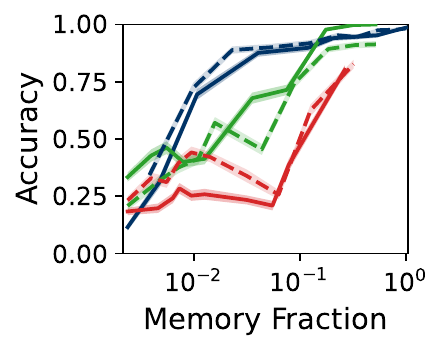}
\caption{\textbf{Qwen3-14B and Qwen3-32B perform quite well on reasoning benchmarks like GSM8K, MATH-500 and GPQA-Diamond in spite of attending only to the prompt and a few recent few tokens (denoted as ``memory size'') during inference.} \textbf{Left:} With a memory size of only 70, 400 and 900 tokens on GSM8K, MATH-500 and GPQA-Diamond respectively, Qwen3-14B reaches 75\% of its full-context accuracy. Qwen3-32B reaches the same percentage of its full-context accuracy with equal or smaller memory (40, 400 and 600 tokens). As a control, we evaluated Qwen3-14B on these benchmarks while only attending to the prompt and a random subset of past reasoning tokens of the same sizes (70, 400 and 900 tokens). It reached only 0\%, 2\% and 27\% of its full-context accuracy. The third GPQA-Diamond result corresponds to an absolute accuracy of 0.161, below the chance level of 0.25. \textbf{Middle:} These memory sizes are a small fraction of the length of correct reasoning. For both models the length of correct reasoning exceeds 992, 4027 and 6054 tokens for GSM8K, MATH-500 and GPQA-Diamond respectively (incorrect reasoning is longer). \textbf{Right:} Accuracy as a function of memory fraction $c/L_{i,c}$ where $c$ is the memory size, $L_{i,c}$ is the generated length of reasoning on problem $i$ for memory size $c$. All memory fractions, indexed by $(i,c)$ pairs, are quantile-binned into ten equal-count bins per model and benchmark. The x-value of each point is the median memory fraction across a bin and the y-value is the mean accuracy across that bin. Both LLMs achieve good accuracy at only 15\% memory fraction. Error bars in the plots denote standard error over 192 problems per benchmark. Details in~\cref{app:knockout}}
\label{fig:knockout_exp}
\end{figure}
Our experiments in \cref{fig:math_eval_acc} and \cref{fig:tom_results} indicate that models should reason with a moderate amount of De Bruijn structure to balance the accuracy than and robustness of their reasoning. We would like to check whether well-trained LLMs could be exploiting the De Bruijn structure in real-world tasks, if it exists. However, it is difficult to precisely quantify the De Bruijn structure in real-world LLM reasoning traces. This is why, in the equation-solving task in \cref{fig:math_eval_acc}, we used explicit intermediate state variables to induce De Bruijn structure. In real-world tasks, reasoning traces often do not represent the state explicitly, and a particular state may only approximately decouple past and future reasoning. Regardless, any LLM that reasons with De Bruijn structure should be able to continue its reasoning trace from a short window of recent context.

We evaluate Qwen3-14B and Qwen3-32B \citep{yangQwen3TechnicalReport2025} on mathematical reasoning benchmarks like GSM8K and MATH-500, and the graduate-level scientific reasoning benchmark GPQA-Diamond. We apply a mask so that each LLM only attends to the prompt and a ``memory'' that consists of its most recent tokens. With a memory size equal to about one-tenth of the mean trace length, both Qwen models retain roughly 90\% of the full reasoning accuracy on GSM8K and 80\% of the accuracy on MATH-500. On GPQA-Diamond, the 14B model retains only 65\% of its full-context accuracy while the 32B model retains 80\%, so the larger model needs less working memory on the harder task (\cref{fig:knockout_exp}, left, middle). GSM8K accuracy saturates near 150--180 tokens for both models. This gives us an indication of the amount of preceding tokens a single reasoning step requires on this task. Furthermore, \cref{fig:knockout_exp} (right) shows that, with memory size that is $\sim$15\% of each problem's reasoning length, both models achieve greater than 0.9 accuracy on GSM8K, greater than 0.85 accuracy on MATH-500 and greater than 0.6 on GPQA-Diamond (estimated by log-linear interpolation in \cref{fig:knockout_exp}, right). Our results show that LLMs can solve mathematical reasoning benchmarks by relying on only their prompt and a local context. When memory is limited, the larger model Qwen3-32B generates fewer tokens to achieve the same accuracy as the smaller model Qwen3-14B (\cref{fig:knockout_exp}, middle). Limiting memory has a smaller effect on larger and more capable models, perhaps suggesting that they can better exploit De Bruijn structure implicit in the data. Our results are consistent with \citet{aghajohariMarkovianThinkerArchitectureAgnostic2025}, who found that off-the-shelf reasoning models retain most of their accuracy when they condition only on the prompt and recent tokens.

\section{related work and discussion}

\textbf{Pattern recognition vs.\@ step-by-step reasoning.}
Generalization is the ability of a model to apply patterns learned from training data to new, unseen data. Classical analyses of this ability, using PAC learning and VC theory to characterize the capacity of the hypothesis class~\citep{vapnik1998statistical,Valiant1984-dx}, mutual information between samples and learned hypotheses~\citep{xu2017information}, or algorithmic stability to characterize the sensitivity of the trained hypothesis to the sample set~\citep{bousquet2002stability}, all rest on the same premise: new data are drawn from the same distribution as the training data. Such generalization to the training distribution is the formal instantiation of the intuitive idea that similar inputs should yield similar outputs~\citep{belkin2019reconciling,cover1967nearest}, i.e., pattern recognition~\citep{minsky1969introduction}. LLMs that perform reasoning exhibit a very different type of generalization. They can piece together evidence, either from training data or from the previous context, in complex ways to solve tasks not seen during pre-training~\citep{weiChainofThoughtPromptingElicits2022}. There is a growing body of literature that investigates this and related phenomena.

Some researchers have argued that reasoning is beneficial because it allows LLMs to make inferences with fewer parameters~\citep{joshi2025theorylearningautoregressivechain}. Others have studied how LLMs extend the size of the computational circuit by chaining operations~\citep{feng2023revealingmysterychainthought,yehudai2025compositionalreasoningtransformersrnns,stroblWhatFormalLanguages2024}. In addition, \citet{prystawskiWhyThinkStep2023a} show that LLMs can traverse Bayesian belief networks by chaining together a sequence of local transitions. There are also a large number of techniques to improve reasoning \citep{weiChainofThoughtPromptingElicits2022, wang2023selfconsistencyimproveschainthought,shinn2023reflexionlanguageagentsverbal,Besta_2024,muennighoff2025s1simpletesttimescaling,yao2023treethoughtsdeliberateproblem,li2025compressingchainofthoughtllmsstep}. As well as works that identify the ideal length of reasoning \citep{sui2025stopoverthinkingsurveyefficient,wu2025lessunderstandingchainofthoughtlength,nyeShowYourWork2021}.

Our work is focused on two key questions: (i) why reasoning emerges in LLMs trained by next-token prediction, and (ii) whether reasoning is different from pattern recognition. First, we show that reasoning can emerge in models that are trained to predict the next token when the data have a De Bruijn structure. Second, we argue that, depending on the structure in the training data, LLMs can implement a spectrum of inference mechanisms, from step-by-step reasoning to statistical pattern recognition. This versatility enables LLMs to compose a precise sequence of reasoning steps for well-structured tasks but also defer to pattern recognition when tasks are ambiguous.

\textbf{Learning simplified models of language with transformers.}
There is work on training transformers on data drawn from a Markov chain to estimate the transition kernel~\citep{makkuvaAttentionMarkovFramework2025,edelmanEvolutionStatisticalInduction2024}. In addition, transformers can represent $n$-gram models exactly~\citep{sveteTransformersCanRepresent2024}. If reasoning has a De Bruijn structure, then there exists a state that makes future reasoning independent of the past. Our results in~\cref{fig:knockout_exp} show that well-trained LLMs can predict quite accurately on mathematical benchmarks in spite of being given access to only a small number of recent tokens and the prompt. While Markov chains and hidden Markov models may be useful models of natural language, our results suggest that De Bruijn graphs are a good model to study questions related to reasoning in LLMs. While there are likely many tasks which are not solvable by reasoning traces with De Bruijn structure, \citet{bertran2026fits} show that the generalizable results of real-world machine learning research are often highly compressible into short token sequences. Conditioned on these sequences, future research is mostly independent of the specific details of earlier results, inducing a De Bruijn structure on the reasoning traces for machine learning research.

\textbf{LLMs are often trained on structured reasoning traces}, e.g., to restate the preceding context or to maintain scratchpads that compress past reasoning~\citep{nyeShowYourWork2021,yanInftyThinkBreakingLength2025,zhouMEM1LearningSynergize2025,aghajohariMarkovianThinkerArchitectureAgnostic2025}. Sliding window attention reinforces an existing tendency of models to attend to their recent context~\citep{liuLostMiddleHow2024,jiangMistral7B2023}. Researchers have also explored using an ``overseer'' LLM to delegate tasks to sub-agents~\citep{zhangChainAgentsLarge2024,HowWeBuilt}. The overseer can perform future reasoning using the sub-agent's results, without relying on a lengthy past context. Thus, the above techniques may be understood as introducing a ``state'' in the reasoning. Any set of reasoning traces can be modeled by a De Bruijn graph $B(V, c)$. If traces have no structure, the context length $c$ in the graph is simply the length of the reasoning trace itself. The smaller the value of $c$, the smaller the complexity of the state. In this sense, techniques that induce a specific kind of state may be interpreted as reducing the value of $c$ to improve performance. In some cases, this state representation can be observed in the activations of the model~\citep{liEmergentWorldRepresentations2022,ramakrishnanMultimodalModelsImagine2026d}. Similar to our story-task in \cref{fig:tom_results}, \citet{bigelow2026stories} also use stories to argue that the trajectories of belief-states of LLMs as they process stories evolve on low-dimensional manifolds.

\textbf{De Bruijn graphs in genomics.} There has been interest in using transformer-based models to generate realistic genome sequences to engineer biological systems~\citep{nguyenSequenceModelingDesign2024}. These models identify common structures in genome sequences and can reproduce these motifs in new sequences. These sequences are known to have De Bruijn structure, and De Bruijn graphs are a common tool used to compress genome sequences into a smaller set of transitions between $c$-mers~\citep{compeauWhyAreBruijn2011}. It is computationally efficient to sample realistic genome sequences by traversing paths along these De Bruijn graphs~\citep{bankevichMultiplexBruijnGraphs2022}. Our results suggest that transformer-based models are successful at generating genome sequences because they have De Bruijn structure. And perhaps they are less successful on sequences where this structure is not explicitly represented. The latter case might provide an opportunity to apply reasoning to problems in biology. It may be possible to train a model to generate intermediate tokens with De Bruijn structure that can subsequently be expanded into these unstructured sequences. 


\newpage
\subsection*{AI use statement}


In this work, we primarily used generative AI tools for writing code, generating synthetic data and proof-checking. We also used generative AI to help us check for mistakes in the grammar and polish some of the writing in one or two paragraphs. We have not used generative AI tools for other tasks such as finding related works, to draft paper sections, and to prove mathematical claims. We have reviewed all AI-assisted work manually. We checked all AI-generated code manually and ran sanity checks and test experiments. We manually checked AI-generated datasets as well as the raw outputs of the models we were evaluating on these datasets. We take responsibility for all the final content of this work, including text, claims or artifacts produced with the aid of generative AI.


\subsection*{Ethics statement}

This paper focuses on theoretical results. We do not believe it raises any ethical concerns.

\subsection*{Reproducibility statement}


The Appendix provides extensive details of the synthetic tasks, architectures and training procedures used in this paper. It also provides detailed proofs of all mathematical results. We will make all source-code for reproducing the numerical experiments discussed in this paper public.




\bibliographystyle{iclr2027_conference}
\bibliography{iclr2027_conference,pratik}

@article{bigelow2026stories,
  title={Stories in Space: In-Context Learning Trajectories in Conceptual Belief Space},
  author={Bigelow, Eric and Sarfati, Rapha{\"e}l and Wurgaft, Daniel and Lewis, Owen and McGrath, Thomas and Merullo, Jack and Geiger, Atticus and Lubana, Ekdeep Singh},
  journal={arXiv preprint arXiv:2605.12412},
  year={2026}
}

@article{vafa2024evaluating,
  title={Evaluating the world model implicit in a generative model},
  author={Vafa, Keyon and Chen, Justin Y and Rambachan, Ashesh and Kleinberg, Jon and Mullainathan, Sendhil},
  journal={Advances in Neural Information Processing Systems},
  volume={37},
  pages={26941--26975},
  year={2024}
}

@inproceedings{weiChainofThoughtPromptingElicits2022,
  title = {Chain-of-{{Thought Prompting Elicits Reasoning}} in {{Large Language Models}}},
  booktitle = {Advances in {{Neural Information Processing Systems}}},
  author = {Wei, Jason and Wang, Xuezhi and Schuurmans, Dale and Bosma, Maarten and {ichter}, brian and Xia, Fei and Chi, Ed and Le, Quoc V and Zhou, Denny},
  year = 2022,
  volume = {35},
  pages = {24824--24837},
  publisher = {Curran Associates, Inc.},
  doi = {10.52202/068431-1800},
  urldate = {2026-09-02}
}

@article{guoDeepSeekR1IncentivizesReasoning2025a,
  title = {{{DeepSeek-R1}} Incentivizes Reasoning in {{LLMs}} through Reinforcement Learning},
  author = {Guo, Daya and Yang, Dejian and Zhang, Haowei and Song, Junxiao and Wang, Peiyi and Zhu, Qihao and Xu, Runxin and Zhang, Ruoyu and Ma, Shirong and Bi, Xiao and Zhang, Xiaokang and Yu, Xingkai and Wu, Yu and Wu, Z. F. and Gou, Zhibin and Shao, Zhihong and Li, Zhuoshu and Gao, Ziyi and Liu, Aixin and Xue, Bing and Wang, Bingxuan and Wu, Bochao and Feng, Bei and Lu, Chengda and Zhao, Chenggang and Deng, Chengqi and Ruan, Chong and Dai, Damai and Chen, Deli and Ji, Dongjie and Li, Erhang and Lin, Fangyun and Dai, Fucong and Luo, Fuli and Hao, Guangbo and Chen, Guanting and Li, Guowei and Zhang, H. and Xu, Hanwei and Ding, Honghui and Gao, Huazuo and Qu, Hui and Li, Hui and Guo, Jianzhong and Li, Jiashi and Chen, Jingchang and Yuan, Jingyang and Tu, Jinhao and Qiu, Junjie and Li, Junlong and Cai, J. L. and Ni, Jiaqi and Liang, Jian and Chen, Jin and Dong, Kai and Hu, Kai and You, Kaichao and Gao, Kaige and Guan, Kang and Huang, Kexin and Yu, Kuai and Wang, Lean and Zhang, Lecong and Zhao, Liang and Wang, Litong and Zhang, Liyue and Xu, Lei and Xia, Leyi and Zhang, Mingchuan and Zhang, Minghua and Tang, Minghui and Zhou, Mingxu and Li, Meng and Wang, Miaojun and Li, Mingming and Tian, Ning and Huang, Panpan and Zhang, Peng and Wang, Qiancheng and Chen, Qinyu and Du, Qiushi and Ge, Ruiqi and Zhang, Ruisong and Pan, Ruizhe and Wang, Runji and Chen, R. J. and Jin, R. L. and Chen, Ruyi and Lu, Shanghao and Zhou, Shangyan and Chen, Shanhuang and Ye, Shengfeng and Wang, Shiyu and Yu, Shuiping and Zhou, Shunfeng and Pan, Shuting and Li, S. S. and Zhou, Shuang and Wu, Shaoqing and Yun, Tao and Pei, Tian and Sun, Tianyu and Wang, T. and Zeng, Wangding and Liu, Wen and Liang, Wenfeng and Gao, Wenjun and Yu, Wenqin and Zhang, Wentao and Xiao, W. L. and An, Wei and Liu, Xiaodong and Wang, Xiaohan and Chen, Xiaokang and Nie, Xiaotao and Cheng, Xin and Liu, Xin and Xie, Xin and Liu, Xingchao and Yang, Xinyu and Li, Xinyuan and Su, Xuecheng and Lin, Xuheng and Li, X. Q. and Jin, Xiangyue and Shen, Xiaojin and Chen, Xiaosha and Sun, Xiaowen and Wang, Xiaoxiang and Song, Xinnan and Zhou, Xinyi and Wang, Xianzu and Shan, Xinxia and Li, Y. K. and Wang, Y. Q. and Wei, Y. X. and Zhang, Yang and Xu, Yanhong and Li, Yao and Zhao, Yao and Sun, Yaofeng and Wang, Yaohui and Yu, Yi and Zhang, Yichao and Shi, Yifan and Xiong, Yiliang and He, Ying and Piao, Yishi and Wang, Yisong and Tan, Yixuan and Ma, Yiyang and Liu, Yiyuan and Guo, Yongqiang and Ou, Yuan and Wang, Yuduan and Gong, Yue and Zou, Yuheng and He, Yujia and Xiong, Yunfan and Luo, Yuxiang and You, Yuxiang and Liu, Yuxuan and Zhou, Yuyang and Zhu, Y. X. and Huang, Yanping and Li, Yaohui and Zheng, Yi and Zhu, Yuchen and Ma, Yunxian and Tang, Ying and Zha, Yukun and Yan, Yuting and Ren, Z. Z. and Ren, Zehui and Sha, Zhangli and Fu, Zhe and Xu, Zhean and Xie, Zhenda and Zhang, Zhengyan and Hao, Zhewen and Ma, Zhicheng and Yan, Zhigang and Wu, Zhiyu and Gu, Zihui and Zhu, Zijia and Liu, Zijun and Li, Zilin and Xie, Ziwei and Song, Ziyang and Pan, Zizheng and Huang, Zhen and Xu, Zhipeng and Zhang, Zhongyu and Zhang, Zhen},
  year = 2025,
  month = sep,
  journal = {Nature},
  volume = {645},
  number = {8081},
  pages = {633--638},
  publisher = {Nature Publishing Group},
  issn = {1476-4687},
  doi = {10.1038/s41586-025-09422-z},
  urldate = {2026-09-02},
  copyright = {2025 The Author(s)},
  langid = {english}
}

@article{glazerFrontierMathBenchmarkEvaluating2025,
  title = {{{FrontierMath}}: {{A Benchmark}} for {{Evaluating Advanced Mathematical Reasoning}} in {{AI}}},
  shorttitle = {{{FrontierMath}}},
  author = {Glazer, Elliot and Erdil, Ege and Besiroglu, Tamay and Chicharro, Diego and Chen, Evan and Gunning, Alex and Olsson, Caroline Falkman and Denain, Jean-Stanislas and Ho, Anson and Santos, Emily de Oliveira and J{\"a}rviniemi, Olli and Barnett, Matthew and Sandler, Robert and Vrzala, Matej and Sevilla, Jaime and Ren, Qiuyu and Pratt, Elizabeth and Levine, Lionel and Barkley, Grant and Stewart, Natalie and Grechuk, Bogdan and Grechuk, Tetiana and Enugandla, Shreepranav Varma and Wildon, Mark},
  year = 2025,
  month = dec,
  number = {arXiv:2411.04872},
  eprint = {2411.04872},
  primaryclass = {cs.AI},
  journal = {arXiv},
  doi = {10.48550/arXiv.2411.04872},
  urldate = {2026-09-02},
  archiveprefix = {arXiv}
}

@article{hubertOlympiadlevelFormalMathematical2026,
  title = {Olympiad-Level Formal Mathematical Reasoning with Reinforcement Learning},
  author = {Hubert, Thomas and Mehta, Rishi and Sartran, Laurent and Horv{\'a}th, Mikl{\'o}s Z. and {\v Z}u{\v z}i{\'c}, Goran and Wieser, Eric and Huang, Aja and Schrittwieser, Julian and Schroecker, Yannick and Masoom, Hussain and Bertolli, Ottavia and Zahavy, Tom and Mandhane, Amol and Yung, Jessica and Beloshapka, Iuliya and Ibarz, Borja and Veeriah, Vivek and Yu, Lei and Nash, Oliver and Lezeau, Paul and Mercuri, Salvatore and S{\"o}nne, Calle and Mehta, Bhavik and Davies, Alex and Zheng, Daniel and Pedregosa, Fabian and Li, Yin and {von Glehn}, Ingrid and Rowland, Mark and Albanie, Samuel and Velingker, Ameya and Schmitt, Simon and Lockhart, Edward and Hughes, Edward and Michalewski, Henryk and Sonnerat, Nicolas and Hassabis, Demis and Kohli, Pushmeet and Silver, David},
  year = 2026,
  month = mar,
  journal = {Nature},
  volume = {651},
  number = {8106},
  pages = {607--613},
  publisher = {Nature Publishing Group},
  issn = {1476-4687},
  doi = {10.1038/s41586-025-09833-y},
  urldate = {2026-09-02},
  copyright = {2025 The Author(s)},
  langid = {english}
}

@inproceedings{jimenezSWEbenchCanLanguage2023,
  title = {{{SWE-bench}}: {{Can Language Models Resolve Real-world Github Issues}}?},
  shorttitle = {{{SWE-bench}}},
  booktitle = {The {{Twelfth International Conference}} on {{Learning Representations}}},
  author = {Jimenez, Carlos E. and Yang, John and Wettig, Alexander and Yao, Shunyu and Pei, Kexin and Press, Ofir and Narasimhan, Karthik R.},
  year = 2023,
  month = oct,
  urldate = {2026-09-02},
  langid = {english}
}

@inproceedings{lewkowyczSolvingQuantitativeReasoning2022,
  title = {Solving {{Quantitative Reasoning Problems}} with {{Language Models}}},
  booktitle = {Advances in {{Neural Information Processing Systems}}},
  author = {Lewkowycz, Aitor and Andreassen, Anders Johan and Dohan, David and Dyer, Ethan and Michalewski, Henryk and Ramasesh, Vinay Venkatesh and Slone, Ambrose and Anil, Cem and Schlag, Imanol and {Gutman-Solo}, Theo and Wu, Yuhuai and Neyshabur, Behnam and {Gur-Ari}, Guy and Misra, Vedant},
  year = 2022,
  month = oct,
  urldate = {2026-09-02},
  langid = {english}
}

@inproceedings{nadgirHowDoesChain2026b,
  title = {How Does {{Chain}} of {{Thought}} Decompose Complex Tasks?},
  booktitle = {Forty-Third {{International Conference}} on {{Machine Learning}}},
  author = {Nadgir, Amrut and Balasubramanian, Vijay and Chaudhari, Pratik},
  year = 2026,
  month = jun,
  urldate = {2026-09-02},
  langid = {english}
}

@inproceedings{cagnettaDerivingNeuralScaling2026c,
  title = {Deriving {{Neural Scaling Laws}} from the {{Statistics}} of {{Natural Language}}},
  booktitle = {Forty-Third {{International Conference}} on {{Machine Learning}}},
  author = {Cagnetta, Francesco and Raventos, Allan and Ganguli, Surya and Wyart, Matthieu},
  year = 2026,
  month = jun,
  urldate = {2026-09-02},
  langid = {english}
}

@inproceedings{prystawskiWhyThinkStep2023a,
  title = {Why Think Step by Step? {{Reasoning}} Emerges from the Locality of Experience},
  shorttitle = {Why Think Step by Step?},
  booktitle = {Thirty-Seventh {{Conference}} on {{Neural Information Processing Systems}}},
  author = {Prystawski, Ben and Li, Michael Y. and Goodman, Noah},
  year = 2023,
  month = nov,
  urldate = {2026-09-04},
  langid = {english}
}

@inproceedings{taoScalingLawsVocabulary2024,
  title = {Scaling {{Laws}} with {{Vocabulary}}: {{Larger Models Deserve Larger Vocabularies}}},
  shorttitle = {Scaling {{Laws}} with {{Vocabulary}}},
  booktitle = {The {{Thirty-eighth Annual Conference}} on {{Neural Information Processing Systems}}},
  author = {Tao, Chaofan and Liu, Qian and Dou, Longxu and Muennighoff, Niklas and Wan, Zhongwei and Luo, Ping and Lin, Min and Wong, Ngai},
  year = 2024,
  month = nov,
  urldate = {2026-09-05},
  langid = {english}
}

@article{nyeShowYourWork2021,
  title = {Show {{Your Work}}: {{Scratchpads}} for {{Intermediate Computation}} with {{Language Models}}},
  shorttitle = {Show {{Your Work}}},
  author = {Nye, Maxwell and Andreassen, Anders Johan and {Gur-Ari}, Guy and Michalewski, Henryk and Austin, Jacob and Bieber, David and Dohan, David and Lewkowycz, Aitor and Bosma, Maarten and Luan, David and Sutton, Charles and Odena, Augustus},
  year = 2021,
  month = nov,
  number = {arXiv:2112.00114},
  eprint = {2112.00114},
  primaryclass = {cs.LG},
  journal = {arXiv},
  doi = {10.48550/arXiv.2112.00114},
  urldate = {2026-09-05},
  archiveprefix = {arXiv}
}

@article{sclarExploreTheoryMind2024,
  title = {Explore {{Theory}} of {{Mind}}: {{Program-guided}} Adversarial Data Generation for Theory of Mind Reasoning},
  shorttitle = {Explore {{Theory}} of {{Mind}}},
  author = {Sclar, Melanie and Yu, Jane and {Fazel-Zarandi}, Maryam and Tsvetkov, Yulia and Bisk, Yonatan and Choi, Yejin and Celikyilmaz, Asli},
  year = 2024,
  month = dec,
  number = {arXiv:2412.12175},
  eprint = {2412.12175},
  primaryclass = {cs.LG},
  journal = {arXiv},
  doi = {10.48550/arXiv.2412.12175},
  urldate = {2026-09-16},
  archiveprefix = {arXiv}
}

@book{sejnowskiDeepLearningRevolution2018,
  title = {The Deep Learning Revolution.},
  author = {Sejnowski, Terrence J.},
  year = 2018,
  series = {The Deep Learning Revolution.},
  pages = {x, 342},
  publisher = {The MIT Press},
  address = {Cambridge,  MA,  US},
  doi = {10.7551/mitpress/11474.001.0001},
  isbn = {9780262038034 (Hardcover)}
}

@article{qwenQwen25TechnicalReport2025,
  title = {Qwen2.5 {{Technical Report}}},
  author = {Qwen and Yang, An and Yang, Baosong and Zhang, Beichen and Hui, Binyuan and Zheng, Bo and Yu, Bowen and Li, Chengyuan and Liu, Dayiheng and Huang, Fei and Wei, Haoran and Lin, Huan and Yang, Jian and Tu, Jianhong and Zhang, Jianwei and Yang, Jianxin and Yang, Jiaxi and Zhou, Jingren and Lin, Junyang and Dang, Kai and Lu, Keming and Bao, Keqin and Yang, Kexin and Yu, Le and Li, Mei and Xue, Mingfeng and Zhang, Pei and Zhu, Qin and Men, Rui and Lin, Runji and Li, Tianhao and Tang, Tianyi and Xia, Tingyu and Ren, Xingzhang and Ren, Xuancheng and Fan, Yang and Su, Yang and Zhang, Yichang and Wan, Yu and Liu, Yuqiong and Cui, Zeyu and Zhang, Zhenru and Qiu, Zihan},
  year = 2025,
  month = jan,
  number = {arXiv:2412.15115},
  eprint = {2412.15115},
  primaryclass = {cs.CL},
  journal = {arXiv},
  doi = {10.48550/arXiv.2412.15115},
  urldate = {2026-09-21},
  archiveprefix = {arXiv}
}

@article{yangQwen3TechnicalReport2025,
  title = {Qwen3 {{Technical Report}}},
  author = {Yang, An and Li, Anfeng and Yang, Baosong and Zhang, Beichen and Hui, Binyuan and Zheng, Bo and Yu, Bowen and Gao, Chang and Huang, Chengen and Lv, Chenxu and Zheng, Chujie and Liu, Dayiheng and Zhou, Fan and Huang, Fei and Hu, Feng and Ge, Hao and Wei, Haoran and Lin, Huan and Tang, Jialong and Yang, Jian and Tu, Jianhong and Zhang, Jianwei and Yang, Jianxin and Yang, Jiaxi and Zhou, Jing and Zhou, Jingren and Lin, Junyang and Dang, Kai and Bao, Keqin and Yang, Kexin and Yu, Le and Deng, Lianghao and Li, Mei and Xue, Mingfeng and Li, Mingze and Zhang, Pei and Wang, Peng and Zhu, Qin and Men, Rui and Gao, Ruize and Liu, Shixuan and Luo, Shuang and Li, Tianhao and Tang, Tianyi and Yin, Wenbiao and Ren, Xingzhang and Wang, Xinyu and Zhang, Xinyu and Ren, Xuancheng and Fan, Yang and Su, Yang and Zhang, Yichang and Zhang, Yinger and Wan, Yu and Liu, Yuqiong and Wang, Zekun and Cui, Zeyu and Zhang, Zhenru and Zhou, Zhipeng and Qiu, Zihan},
  year = 2025,
  month = may,
  number = {arXiv:2505.09388},
  eprint = {2505.09388},
  primaryclass = {cs.CL},
  journal = {arXiv},
  doi = {10.48550/arXiv.2505.09388},
  urldate = {2026-09-22},
  archiveprefix = {arXiv}
}

@article{gsm8k,
  author       = {Karl Cobbe and
                  Vineet Kosaraju and
                  others},
  title        = {Training Verifiers to Solve Math Word Problems},
  journal      = {CoRR},
  volume       = {abs/2110.14168},
  year         = {2021},
  eprinttype    = {arXiv},
  eprint       = {2110.14168}
}

@inproceedings{hendrycks2021measuringmathematicalproblemsolving,
  title = {Measuring {{Mathematical Problem Solving With}} the {{MATH Dataset}}},
  booktitle = {Thirty-Fifth {{Conference}} on {{Neural Information Processing Systems Datasets}} and {{Benchmarks Track}} ({{Round}} 2)},
  author = {Hendrycks, Dan and Burns, Collin and Kadavath, Saurav and Arora, Akul and Basart, Steven and Tang, Eric and Song, Dawn and Steinhardt, Jacob},
  year = 2021,
  month = aug,
  urldate = {2026-09-26},
  langid = {english}
}

@article{reinGPQAGraduateLevelGoogleProof2023,
  title = {{{GPQA}}: {{A Graduate-Level Google-Proof Q}}\&{{A Benchmark}}},
  shorttitle = {{{GPQA}}},
  author = {Rein, David and Hou, Betty Li and Stickland, Asa Cooper and Petty, Jackson and Pang, Richard Yuanzhe and Dirani, Julien and Michael, Julian and Bowman, Samuel R.},
  year = 2023,
  month = nov,
  journal = {arXiv},
  urldate = {2026-09-22},
  howpublished = {https://arxiv.org/abs/2311.12022v1},
  langid = {english}
}

@inproceedings{makkuvaAttentionMarkovFramework2025,
  title = {Attention with {{Markov}}: {{A Curious Case}} of {{Single-layer Transformers}}},
  shorttitle = {Attention with {{Markov}}},
  booktitle = {The {{Thirteenth International Conference}} on {{Learning Representations}}},
  author = {Makkuva, Ashok Vardhan and Bondaschi, Marco and Girish, Adway and Nagle, Alliot and Jaggi, Martin and Kim, Hyeji and Gastpar, Michael},
  year = 2024,
  month = oct,
  urldate = {2026-09-26},
  langid = {english}
}

@inproceedings{edelmanEvolutionStatisticalInduction2024,
  title = {The {{Evolution}} of {{Statistical Induction Heads}}: {{In-Context Learning Markov Chains}}},
  shorttitle = {The {{Evolution}} of {{Statistical Induction Heads}}},
  booktitle = {The {{Thirty-eighth Annual Conference}} on {{Neural Information Processing Systems}}},
  author = {Edelman, Ezra and Tsilivis, Nikolaos and Edelman, Benjamin L. and Malach, Eran and Goel, Surbhi},
  year = 2024,
  month = nov,
  urldate = {2026-09-26},
  langid = {english}
}

@article{sveteTransformersCanRepresent2024,
  title = {Transformers {{Can Represent}} \$n\$-Gram {{Language Models}}},
  author = {Svete, Anej and Cotterell, Ryan},
  year = 2024,
  month = jun,
  number = {arXiv:2404.14994},
  eprint = {2404.14994},
  primaryclass = {cs.CL},
  journal = {arXiv},
  doi = {10.48550/arXiv.2404.14994},
  urldate = {2026-09-24},
  archiveprefix = {arXiv}
}

@inproceedings{liEmergentWorldRepresentations2022,
  title = {Emergent {{World Representations}}: {{Exploring}} a {{Sequence Model Trained}} on a {{Synthetic Task}}},
  shorttitle = {Emergent {{World Representations}}},
  booktitle = {The {{Eleventh International Conference}} on {{Learning Representations}}},
  author = {Li, Kenneth and Hopkins, Aspen K. and Bau, David and Vi{\'e}gas, Fernanda and Pfister, Hanspeter and Wattenberg, Martin},
  year = 2022,
  month = sep,
  urldate = {2026-09-24},
  langid = {english}
}

@article{ramakrishnanMultimodalModelsImagine2026d,
  title = {Do Multimodal Models Imagine Electric Sheep?},
  author = {Ramakrishnan, Santhosh Kumar and Vondrick, Carl and Giryes, Raja and Kr{\"a}henb{\"u}hl, Philipp and Koltun, Vladlen},
  year = 2026,
  month = may,
  number = {arXiv:2605.09693},
  eprint = {2605.09693},
  primaryclass = {cs.CV},
  journal = {arXiv},
  doi = {10.48550/arXiv.2605.09693},
  urldate = {2026-09-24},
  archiveprefix = {arXiv}
}

@inproceedings{yanInftyThinkBreakingLength2025,
  title = {{{InftyThink}}: {{Breaking}} the {{Length Limits}} of {{Long-Context Reasoning}} in {{Large Language Models}}},
  shorttitle = {{{InftyThink}}},
  booktitle = {The {{Fourteenth International Conference}} on {{Learning Representations}}},
  author = {Yan, Yuchen and Shen, Yongliang and Liu, Yang and Jiang, Jin and Zhang, Mengdi and Shao, Jian and Zhuang, Yueting},
  year = 2025,
  month = oct,
  urldate = {2026-09-24},
  langid = {english}
}

@inproceedings{zhouMEM1LearningSynergize2025,
  title = {{{MEM1}}: {{Learning}} to {{Synergize Memory}} and {{Reasoning}} for {{Efficient Long-Horizon Agents}}},
  shorttitle = {{{MEM1}}},
  booktitle = {The {{Fourteenth International Conference}} on {{Learning Representations}}},
  author = {Zhou, Zijian and Qu, Ao and Wu, Zhaoxuan and Kim, Sunghwan and Prakash, Alok and Rus, Daniela and Low, Bryan Kian Hsiang and Liang, Paul Pu},
  year = 2025,
  month = oct,
  urldate = {2026-09-24},
  langid = {english}
}

@article{liuLostMiddleHow2024,
  title = {Lost in the {{Middle}}: {{How Language Models Use Long Contexts}}},
  shorttitle = {Lost in the {{Middle}}},
  author = {Liu, Nelson F. and Lin, Kevin and Hewitt, John and Paranjape, Ashwin and Bevilacqua, Michele and Petroni, Fabio and Liang, Percy},
  year = 2024,
  journal = {Transactions of the Association for Computational Linguistics},
  volume = {12},
  pages = {157--173},
  publisher = {MIT Press},
  address = {Cambridge, MA},
  doi = {10.1162/tacl_a_00638},
  urldate = {2026-09-24}
}

@article{jiangMistral7B2023,
  title = {Mistral {{7B}}},
  author = {Jiang, Albert Q. and Sablayrolles, Alexandre and Mensch, Arthur and Bamford, Chris and Chaplot, Devendra Singh and de las Casas, Diego and Bressand, Florian and Lengyel, Gianna and Lample, Guillaume and Saulnier, Lucile and Lavaud, L{\'e}lio Renard and Lachaux, Marie-Anne and Stock, Pierre and Scao, Teven Le and Lavril, Thibaut and Wang, Thomas and Lacroix, Timoth{\'e}e and Sayed, William El},
  year = 2023,
  month = oct,
  number = {arXiv:2310.06825},
  eprint = {2310.06825},
  primaryclass = {cs.CL},
  journal = {arXiv},
  doi = {10.48550/arXiv.2310.06825},
  urldate = {2026-09-24},
  archiveprefix = {arXiv}
}

@inproceedings{zhangChainAgentsLarge2024,
  title = {Chain of {{Agents}}: {{Large Language Models Collaborating}} on {{Long-Context Tasks}}},
  shorttitle = {Chain of {{Agents}}},
  booktitle = {The {{Thirty-eighth Annual Conference}} on {{Neural Information Processing Systems}}},
  author = {Zhang, Yusen and Sun, Ruoxi and Chen, Yanfei and Pfister, Tomas and Zhang, Rui and Arik, Sercan O.},
  year = 2024,
  month = nov,
  urldate = {2026-09-24},
  langid = {english}
}

@misc{HowWeBuilt,
    title = {How we built our multi-agent research system},
    year = 2025,
    url = {https://www.anthropic.com/engineering/multi-agent-research-system},
    language = {en},
    urldate = {2026-09-24},
    author = {Anthropic},
}

@article{bertran2026fits,
  title={What Fits (Into Few Tokens) Doesn't Overfit: Compression and Generalization in ML Research Agents},
  author={Bertran, Martin Andres and Roth, Aaron and Wu, Zhiwei Steven},
  journal={arXiv preprint arXiv:2606.11045},
  year={2026}
}

@article{joshi2025theorylearningautoregressivechain,
      title={A Theory of Learning with Autoregressive Chain of Thought}, 
      author={Nirmit Joshi and Gal Vardi and others},
      
  journal={arXiv preprint arXiv:2503.07932},
  year={2025},
      eprint={2503.07932},
      archivePrefix={arXiv},
      primaryClass={stat.ML}
}

@inproceedings{feng2023revealingmysterychainthought, author = {Feng, Guhao and Zhang, Bohang and Gu, Yuntian and Ye, Haotian and He, Di and Wang, Liwei}, title = {Towards revealing the mystery behind chain of thought: a theoretical perspective}, year = {2023}, publisher = {Curran Associates Inc.}, address = {Red Hook, NY, USA}, booktitle = {Proceedings of the 37th International Conference on Neural Information Processing Systems}, articleno = {3100}, numpages = {42}, location = {New Orleans, LA, USA}, series = {NIPS '23} }

@inproceedings{
yehudai2025compositionalreasoningtransformersrnns,
title={Compositional Reasoning with Transformers, {RNN}s, and Chain of Thought},
author={Gilad Yehudai and Noah Amsel and Joan Bruna},
booktitle={The Thirty-ninth Annual Conference on Neural Information Processing Systems},
year={2026}
}

@article{stroblWhatFormalLanguages2024,
  title = {What {{Formal Languages Can Transformers Express}}? {{A Survey}}},
  shorttitle = {What {{Formal Languages Can Transformers Express}}?},
  author = {Strobl, Lena and Merrill, William and Weiss, Gail and Chiang, David and Angluin, Dana},
  year = 2024,
  month = may,
  journal = {Transactions of the Association for Computational Linguistics},
  volume = {12},
  pages = {543--561},
  issn = {2307-387X},
  doi = {10.1162/tacl_a_00663},
  urldate = {2026-09-25}
}

@inproceedings{
wang2023selfconsistencyimproveschainthought,
title={Self-Consistency Improves Chain of Thought Reasoning in Language Models},
author={Xuezhi Wang and Jason Wei and Dale Schuurmans and Quoc V Le and Ed H. Chi and Sharan Narang and Aakanksha Chowdhery and Denny Zhou},
booktitle={The Eleventh International Conference on Learning Representations },
year={2023}
}

@inproceedings{shinn2023reflexionlanguageagentsverbal, author = {Shinn, Noah and Cassano, Federico and Gopinath, Ashwin and Narasimhan, Karthik and Yao, Shunyu}, title = {Reflexion: language agents with verbal reinforcement learning}, year = {2023}, publisher = {Curran Associates Inc.}, address = {Red Hook, NY, USA}, booktitle = {Proceedings of the 37th International Conference on Neural Information Processing Systems}, articleno = {377}, numpages = {19}, location = {New Orleans, LA, USA}, series = {NIPS '23} }

@article{Besta_2024,
   title={Graph of Thoughts: Solving Elaborate Problems with Large Language Models},
   volume={38},
   ISSN={2159-5399},
   number={16},
   journal={Proceedings of the AAAI Conference on Artificial Intelligence},
   publisher={Association for the Advancement of Artificial Intelligence (AAAI)},
   author={Besta, Maciej and Blach, Nils and others},
   year={2024},
   month=mar, pages={17682–17690} }

@article{muennighoff2025s1simpletesttimescaling,
  title = {S1: {{Simple}} Test-Time Scaling},
  shorttitle = {S1},
  author = {Muennighoff, Niklas and Yang, Zitong and Shi, Weijia and Li, Xiang Lisa and {Fei-Fei}, Li and Hajishirzi, Hannaneh and Zettlemoyer, Luke and Liang, Percy and Cand{\`e}s, Emmanuel and Hashimoto, Tatsunori},
  year = 2025,
  month = mar,
  number = {arXiv:2501.19393},
  eprint = {2501.19393},
  primaryclass = {cs.CL},
  journal = {arXiv},
  doi = {10.48550/arXiv.2501.19393},
  urldate = {2026-09-26},
  archiveprefix = {arXiv}
}

@inproceedings{yao2023treethoughtsdeliberateproblem, author = {Yao, Shunyu and Yu, Dian and others}, title = {Tree of thoughts: deliberate problem solving with large language models}, year = {2023}, publisher = {Curran Associates Inc.}, address = {Red Hook, NY, USA}, booktitle = {Proceedings of the 37th International Conference on Neural Information Processing Systems}, articleno = {517}, numpages = {14}, location = {New Orleans, LA, USA}, series = {NIPS '23} }

@inproceedings{
li2025compressingchainofthoughtllmsstep,
title={Making Slow Thinking Faster: Compressing {LLM} Chain-of-Thought via Step Entropy},
author={Zeju Li and Jianyuan Zhong and Ziyang Zheng and Xiangyu Wen and Zhijian Xu and Yingying Cheng and Fan Zhang and Qiang Xu},
booktitle={The Fourteenth International Conference on Learning Representations},
year={2026}
}

@article{
sui2025stopoverthinkingsurveyefficient,
title={Stop Overthinking: A Survey on Efficient Reasoning for Large Language Models},
author={Yang Sui and Yu-Neng Chuang and others},
journal={Transactions on Machine Learning Research},
issn={2835-8856},
year={2025}
}

@inproceedings{
wu2025lessunderstandingchainofthoughtlength,
title={When More is Less: Understanding Chain-of-Thought Length in {LLM}s},
author={Yuyang Wu and Yifei Wang and Ziyu Ye and Tianqi Du and Stefanie Jegelka and Yisen Wang},
booktitle={The Fourteenth International Conference on Learning Representations},
year={2026}
}

@inproceedings{aghajohariMarkovianThinkerArchitectureAgnostic2025,
  title = {The {{Markovian Thinker}}: {{Architecture-Agnostic Linear Scaling}} of {{Reasoning}}},
  shorttitle = {The {{Markovian Thinker}}},
  booktitle = {The {{Fourteenth International Conference}} on {{Learning Representations}}},
  author = {Aghajohari, Milad and Chitsaz, Kamran and Kazemnejad, Amirhossein and Chandar, Sarath and Sordoni, Alessandro and Courville, Aaron and Reddy, Siva},
  year = 2025,
  month = oct,
  urldate = {2026-09-26},
  langid = {english}
}

@article{nguyenSequenceModelingDesign2024,
  title = {Sequence Modeling and Design from Molecular to Genome Scale with {{Evo}}},
  author = {Nguyen, Eric and Poli, Michael and Durrant, Matthew G. and Kang, Brian and Katrekar, Dhruva and Li, David B. and Bartie, Liam J. and Thomas, Armin W. and King, Samuel H. and Brixi, Garyk and Sullivan, Jeremy and Ng, Madelena Y. and Lewis, Ashley and Lou, Aaron and Ermon, Stefano and Baccus, Stephen A. and {Hernandez-Boussard}, Tina and R{\'e}, Christopher and Hsu, Patrick D. and Hie, Brian L.},
  year = 2024,
  month = nov,
  journal = {Science},
  volume = {386},
  number = {6723},
  pages = {eado9336},
  publisher = {American Association for the Advancement of Science},
  doi = {10.1126/science.ado9336},
  urldate = {2026-10-01}
}

@article{compeauWhyAreBruijn2011,
  title = {Why Are de {{Bruijn}} Graphs Useful for Genome Assembly?},
  author = {Compeau, Phillip E. C. and Pevzner, Pavel A. and Tesler, Glenn},
  year = 2011,
  month = nov,
  journal = {Nature biotechnology},
  volume = {29},
  number = {11},
  pages = {987--991},
  issn = {1087-0156},
  doi = {10.1038/nbt.2023},
  urldate = {2026-10-01},
  pmcid = {PMC5531759},
  pmid = {22068540}
}

@article{bankevichMultiplexBruijnGraphs2022,
  title = {Multiplex de {{Bruijn}} Graphs Enable Genome Assembly from Long, High-Fidelity Reads},
  author = {Bankevich, Anton and Bzikadze, Andrey V. and Kolmogorov, Mikhail and Antipov, Dmitry and Pevzner, Pavel A.},
  year = 2022,
  month = jul,
  journal = {Nature Biotechnology},
  volume = {40},
  number = {7},
  pages = {1075--1081},
  publisher = {Nature Publishing Group},
  issn = {1546-1696},
  doi = {10.1038/s41587-022-01220-6},
  urldate = {2026-10-01},
  copyright = {2022 The Author(s), under exclusive licence to Springer Nature America, Inc.},
  langid = {english}
}

@article{belkin2019reconciling,
  title = {Reconciling Modern Machine-Learning Practice and the Classical Bias--Variance Trade-Off},
  author = {Belkin, Mikhail and Hsu, Daniel and Ma, Siyuan and Mandal, Soumik},
  year = 2019,
  journal = {Proceedings of the National Academy of Sciences},
  volume = {116},
  number = {32},
  pages = {15849--15854}
}

@article{bousquet2002stability,
  title = {Stability and Generalization},
  author = {Bousquet, Olivier and Elisseeff, Andr{\'e}},
  year = 2002,
  journal = {Journal of machine learning research},
  volume = {2},
  number = {Mar},
  pages = {499--526}
}

@article{cover1967nearest,
  title = {Nearest Neighbor Pattern Classification},
  author = {Cover, Thomas and Hart, Peter},
  year = 1967,
  journal = {IEEE transactions on information theory},
  volume = {13},
  number = {1},
  pages = {21--27}
}

@article{minsky1969introduction,
  title = {An Introduction to Computational Geometry},
  author = {Minsky, Marvin and Papert, Seymour},
  year = 1969,
  journal = {Cambridge tiass., HIT},
  volume = {479},
  number = {480},
  pages = {104}
}

@article{Valiant1984-dx,
  title = {A Theory of the Learnable},
  author = {Valiant, L G},
  year = 1984,
  month = nov,
  journal = {Communications of the ACM},
  volume = {27},
  number = {11},
  pages = {1134--1142},
  address = {New York, NY, USA}
}

@book{vapnik1998statistical,
  title = {Statistical Learning Theory},
  author = {Vapnik, Vladimir},
  year = 1998
}

@inproceedings{xu2017information,
  title = {Information-Theoretic Analysis of Generalization Capability of Learning Algorithms},
  booktitle = {{{NIPS}}},
  author = {Xu, Aolin and Raginsky, Maxim},
  year = 2017
}

\appendix
\crefalias{section}{appendix}
\crefalias{subsection}{subappendix}
\crefalias{subsubsection}{subappendix}
\section{mathematical results about de bruijn graphs}
\label{app:debruijn_math}
In the following theorems and their proofs, $B(V,c)$ is a De Bruijn graph of size $V>4$, $c\geq 2$ and $D_{\pi}$ refers to the corresponding De Bruijn DAG with $S$ start nodes, $N$ answer nodes and topological sort $\pi$.

\subsection{$D_{\pi}$ is a dag with many start-to-answer paths}
\begin{lemma}
    $D_{\pi}$ is a DAG for any ordering of nodes $\pi$. 
\end{lemma}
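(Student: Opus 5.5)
The plan is to exhibit a strictly increasing potential along every edge of $D_\pi$, which immediately rules out directed cycles. Here $\pi$ is treated as an arbitrary bijection from the node set of $B(V,c)$ to $\{1,\dots,V^c\}$. No topological property is needed, because the pruning step itself enforces acyclicity. I will also note that $D_\pi$ is a subgraph of $B(V,c)$: every pruning operation only deletes edges or nodes, so no new edges are ever introduced.

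First, fix the edge set of $D_\pi$. By construction, every surviving edge $(u,v)$ was not removed by the rule ``prune all edges $(u,v)$ with $\pi(v)<\pi(u)$.'' Hence every surviving edge satisfies $\pi(v)\geq\pi(u)$. The inequality is in fact strict, since $\pi$ is injective and $u\neq v$, with one exception: self-loops. $B(V,c)$ does contain self-loops at the constant contexts $aa\cdots a$, because appending $a$ maps $aa\cdots a$ to itself. These satisfy $\pi(v)=\pi(u)$ and are not removed by the ordering rule as literally stated. This is the one point needing care. I will resolve it by reading the rule as removing all non-forward edges, i.e.\ keeping only edges with $\pi(u)<\pi(v)$, which is clearly the intended meaning of ``forward-pointing.'' Alternatively, I can observe that a self-loop lies on a start-to-answer path only if that path is non-simple. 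Under either reading, self-loops are excluded from $D_\pi$, and I will state this convention explicitly.

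Second, suppose for contradiction that $D_\pi$ contains a directed cycle $v_0\to v_1\to\cdots\to v_k=v_0$ with $k\geq1$. Applying the strict inequality along each edge gives
\[
\pi(v_0)<\pi(v_1)<\cdots<\pi(v_k)=\pi(v_0),
\]
which is a contradiction. Therefore $D_\pi$ is acyclic. The subsequent pruning steps (removing edges out of answer nodes, edges into start nodes, and edges and nodes off start-to-answer paths) only delete structure, so they cannot create a cycle. The conclusion therefore holds for the final $D_\pi$ and for any ordering $\pi$. The only real obstacle is the self-loop and equality-case bookkeeping handled in the first step; the rest is immediate.
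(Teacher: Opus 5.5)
Your proof is correct and is the same argument as the paper's: every surviving edge strictly increases $\pi$, so a directed cycle would force $\pi(u_1)<\pi(u_1)$. Your explicit treatment of self-loops at constant contexts goes slightly beyond the paper, whose proof (including the $k=1$ case) silently assumes strict inequality on every edge. That assumption matches the rule the paper itself states in the proof of the next lemma (deleting edges with $\pi(u)\geq\pi(v)$), not the literal $\pi(v)<\pi(u)$ pruning rule in the main text.
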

\begin{proof}
    We will prove, by contradiction, that the generated graph has no cycles. Assume that $D_{\pi}$ contains a cycle. Then, it contains a sequence of nodes $(u_1, u_2,\dots, u_k)$ for some integer $k \geq 1$ such that there is a directed edge from node $u_i$ to $u_{i+1}$ for any $1 \leq i < k$, and there is a directed edge from $u_k$ to $u_1$. This means that $\pi(u_1) <\dots < \pi(u_k) < \pi(u_1) \implies \pi(u_1) < \pi(u_1)$ which is clearly an impossibility. By contradiction, the procedure generates a DAG.
\end{proof}

\begin{lemma}
    There exists a number of start nodes $S > 0$ and a permutation $\pi$ such that $D_{\pi}$ contains the maximal count of distinct paths from a start node to the $N$ answer nodes of any DAG that can be constructed from the edges in $B(V,c)$. 
\end{lemma}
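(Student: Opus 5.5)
Our plan is to show that any DAG $G$ built from edges of $B(V,c)$ can be realized as (a subgraph of) some $D_\pi$ with the same answer set, and that adding edges never decreases the number of start-to-answer paths. Fix an arbitrary DAG $G \subseteq B(V,c)$ with $N$ designated answer nodes that attains the maximal number of start-to-answer paths. The maximum exists because there are finitely many such DAGs. First, we may assume that every answer node has no outgoing edges in $G$: deleting edges out of answer nodes only removes paths that continue past an answer, so it does not reduce the count of paths that terminate at answer nodes. In the same way, we may take the sources of $G$ (nodes with in-degree zero) as the start nodes, and set $S$ to be their number. Every maximal path then begins at a source, so this choice loses nothing, and this is where the freedom to choose $S>0$ is used.

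Next we build $\pi$. Since $G$ is acyclic it admits a topological sort. We choose one in which the sources of $G$ come first and the $N$ answer nodes come last. The answer nodes have out-degree zero, so they can be placed at the end. The sources have in-degree zero, so they can be placed at the beginning; if some node is both a source and an answer, we handle it separately or note that it contributes only trivial paths. Nodes of $B(V,c)$ that are not in $G$ are inserted anywhere in the middle. Under this $\pi$, every edge of $G$ goes forward, so it survives the pruning of backward edges. It also survives the other pruning rules: it does not leave an answer node, it does not enter a start node, and it lies on a start-to-answer path whenever it did so in $G$. Hence $G$'s start-to-answer paths form a subset of those of $D_\pi$.

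Finally, we observe that the start-to-answer paths of $D_\pi$ are themselves paths in a DAG built from edges of $B(V,c)$, with the same answer set. So their number is at most the maximum, which is attained by $G$. Therefore $D_\pi$ attains the maximum exactly.

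The main obstacle is the bookkeeping around start and answer nodes. The comparison class in the statement (``any DAG constructed from the edges'') must be read so that the answer set is the same $N$ nodes, or at least that answer sets can be relabeled. We also need the sources and the answer nodes to be placeable at the two ends of a topological order simultaneously. We plan to settle this by fixing the convention that answers are sinks and starts are sources of the optimal $G$, and by arguing that sinks can always be moved to the end, and sources to the front, of a topological order.
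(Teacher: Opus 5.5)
Your proposal is correct and is essentially the paper's argument: take a path-maximizing DAG, extend a topological sort of it to all nodes of $B(V,c)$ with its start nodes first and its $N$ answer nodes last, note that no edge of the maximizer is pruned so it sits inside $D_\pi$, and conclude by maximality. The paper leaves your normalization (answers as sinks, starts as sources) implicit when it asserts that such a sort exists. Your justification that deleting edges out of answer nodes loses no paths holds only under the convention, also used in the paper's proof of Theorem~1, that a start-to-answer path has no intermediate start or answer node.
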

\begin{proof}
    Let $D^*$ be a DAG that only contains edges from the De Bruijn graph $B(V,c)$, maximizes the number of distinct paths from its $S^*$ start nodes to the $N$ answer nodes and does not contain any vacuous edges or nodes that are not on a start-to-answer path. There exists a sort $\pi^*$ of the nodes in $B(V,c)$ such that the ordering in $\pi^*$ defines a topological sort over the nodes in $D^*$, the $S^*$ start nodes in $D^*$ come first in the ordering and the $N$ answer nodes last. Let $D_{\pi^*}$ be the DAG generated with the optimal number of start nodes $S^*$ and ordering of nodes $\pi^*$. The procedure to create $D_{\pi^*}$ deletes any edge from node $u$ to $v$ such that $\pi^*(u) \geq \pi^*(v)$. Since $\pi^*$ defines a topological sort of the nodes in $D^*$, no edge in $D^*$ will be deleted by this procedure. Furthermore, both $D_{\pi^*}$ and $D^*$ have the same set of start and answer nodes, and $D^*$ does not contain any edges that are not in the De Bruijn graph $B(V,c)$. This means that the generated graph $D_{\pi^*}$ contains all the edges in $D^*$, so $D^* \subseteq D_{\pi^*}$. Therefore the number of distinct paths from start to answer nodes in $D_{\pi^*}$ is greater than or equal to the number of paths of that kind in $D^*$. Since $D^*$ is a maximizer of the number of paths, $D_{\pi^*}$ is as well.
\end{proof}

 These lemmas establish that the De Bruijn DAG $D_{\pi}$ captures the kinds of reasoning structures we are interested in, i.e., those with a large number of finite-length reasoning traces.

\subsection{proofs of theorems and corollaries}

\begin{proof}[Proof of \cref{theorem1}]
    Define the alphabet of size $V$ as $A = \{0,\dots, V-1\}$. Each node $u$ in $D_{\pi}$ can be written as $u=a_1a_2\dots a_c$ for all $a_i\in A$. We will also use shorthand such as $0^k$ or $(V-1)^k$ to refer to length $k$ strings of the repeated character $0$ or $(V-1)$ respectively. While this shorthand could be mistaken as exponents, its meaning will be stated or clear from the context. Define the digit-sum of this node $u$ as $$\sigma(u) = \sum_{i=1}^c a_i.$$And let $\pi$ be an ordering by this digit sum such that for any two nodes $u_1,u_2$ in $D_{\pi}$, $$\pi(u_1) < \pi(u_2)\implies \sigma(u_1)\leq \sigma(u_2).$$
    Tie-breaking between two nodes with equal digit-sum is done at random.
    
    First, we prove the bound on the total number of start-to-answer paths $P_{\text{tot}}$. 
	Consider the original De Bruijn graph $B(V,c)$ from which the DAG $D_{\pi}$ was derived. There is an edge $u\to v$ in $B(V,c)$ if and only if we can write  $u = a_1a_2\dots a_c$ and $v=a_2a_3\dots b$ for all $a_i, b \in A$. This edge remains in $D_{\pi}$ if $\sigma(v) > \sigma (u)$ so $b > a_1$. Taking this one step further, consider the directed path with $l$ nodes, $u_1\to u_2\to\dots\to u_l$ in $B(V,c)$. This path is generated from the string $a_1a_2\dots a_{c+l-1}$ where each $u_j = a_ja_{j+1}\dots a_{c+j-1}$. The path is guaranteed to exist in $D_{\pi}$ if (i) $u_1$ is a start and $u_l$ is an answer, (ii) no intermediate node $u_i$ for $1 < i < l$ is a start or an answer, (iii) and every edge in the path points to a node with a greater digit-sum than the last. The third condition is equivalently rewritten as $$a_{c+j} > a_j$$ for any $1\leq j \leq l-1$. This observation will be useful later in the proof. 
	Recall the start nodes are the first $S$ nodes in $\pi$ and the answer nodes are the last $N$ nodes in $\pi$.  Because the number of start and answer nodes are bounded by ${c+m-1\choose m-1}$, each start node has digit-sum $< m$ and each answer node has digit-sum $> c(V-1)-m$. This is because there are exactly ${c+m-1\choose m-1}$ possible ways to create start or answer nodes with their respective allowed digit-sums (by the stars and bars method). In addition, the node with the smallest digit-sum $0^c$ is guaranteed to be a start and the node $(V-1)^c$ with the largest digit-sum is guaranteed to be an answer node. Therefore, the number of distinct paths from $0^c$ to $(V-1)^c$ in $D_{\pi}$ is a lower bound on the total number of start-to-answer paths in $D_{\pi}$. We would like to estimate this value.
	A generic path between these two nodes $0^c \to u_1\to\dots\to u_{tc+c-1}\to (V-1)^c$ with $(t+1)c-1$ intermediate nodes can be generated from the string $0^ca_1a_2\dots a_{tc}(V-1)^c$. This is a valid path in $D_{\pi}$ if
	(i) $a_i \notin \{0,1,\dots, m-1\} \cup \{V-1,V-2,\dots, (V-1)-m+1\}$ which guarantees that no intermediate node in the path is a start or an answer,
	(ii) $a_{j+c} > a_j$ for any $1 \leq j \leq c(t-1)$.
	These conditions are repeated from our observation above. Satisfying them requires that $0 \leq t \leq (V-2m)$. The number of such paths is just the number of monotonically increasing sequences of length $t$, raised to the $c$ and summed over all $t$ $$\sum_{t=0}^{V-2m}\left[{V-2m \choose t}\right]^c.$$
    
	Second, we prove the bound on $P_{\text{min}}$ for the same digit-sum ordering $\pi$. First note that the minimum number of paths required to cover every edge in $D_{\pi}$, denoted by $P_{\min}$, is at most the number of edges in $D_{\pi}$. The following argument proves this fact. Consider the set $\mathcal P_{\text{min}}$ of paths in $D_{\pi}$ that cover all the edges in $D_{\pi}$ and is of size $P_{\text{min}}$.  Each path $p\in \mathcal P_{\text{min}}$ must contain at least one edge in $D_{\pi}$ that is not in any other path in $\mathcal P_{\text{min}}$. Otherwise the set $\mathcal P_{\text{min}}\setminus \{p\}$ has size $P_{\text{min}}-1$ and still covers every edge in $D_{\pi}$. This is in contradiction with the definition that $P_{\text{min}}$ is the minimum number of paths required to cover all the edges in $D_{\pi}$. Since every path $p\in\mathcal P_{\text{min}}$ is associated with at least one unique edge, $P_{\text{min}}$ can be at most the number of edges in $D_{\pi}$. We would like to upper bound the number of edges in $D_{\pi}$ which will give us an upper bound on $P_{\text{min}}$. 
	For every pair of nodes $u=a_1\dots a_c$ and $v=b_1\dots b_c$ for $a_i,b_i\in A$, an edge exists from $u\to v$ only if $b_c \geq a_1$ and $a_{i+1}=b_i$ for all $1 \leq i \leq c-1$. There are $$\frac{V^c(V+1)}{2}$$ordered pairs of nodes satisfying these properties which is an upper bound on $P_{\text{min}}$.

    By taking the largest term in the bound on $P_{\text{tot}}$ in \cref{theorem1}, we know that $$P_{\text{tot}} \geq \left[\binom{V-2m}{\lfloor (V-2m)/2\rfloor}\right]^c.$$ In the asymptotic limit $V\to \infty$ we apply Stirling's approximation and get $$P_{\text{tot}} \geq \left(\frac{2}{\pi}\right)^{c/2}\frac{2^{c(V-2m)}}{(V-2m)^{c/2}}.$$ In this limit, the ratios $|E|/P_{\text{tot}}$ and $P_{\text{min}}/P_{\text{tot}}$ decay to $0$ exponentially quickly as $V$ increases.
	Next consider the same two ratios $$\frac{P_{\text{min}}}{P_{\text{tot}}} \leq \frac{|E|}{P_{\text{tot}}} \leq \left[\frac{V+1}{2}\left(\frac{V}{\binom{V-2m}{\lfloor (V-2m)/2\rfloor}}\right)^c\right]$$ in the limit that $c\to \infty$. For sufficiently large $V$, the fraction $V/{V-2m \choose \lfloor (V-2m)/2\rfloor} < 1$. Thus, taking the limit $c\to\infty$ yields $0$ for both ratios.
\end{proof}

\begin{proof}[Proof of \cref{theorem2}]
    Let $\pi$ be the digit-sum ordering as described in the proof of Theorem 1. If two nodes $u=a_1a_2\dots a_c$ and $v=b_1b_2\dots b_c$ have the same digit-sum, we will break ties by lexicographically comparing their respective reversed strings $\bar u = a_ca_{c-1}\dots a_1$ and $\bar v = b_cb_{c-1}\dots b_1$. Specifically, $\bar v < \bar u$ in the lexicographic comparison if, for $i^* = \max\{i: a_i\neq b_i\}$, $b_{i^*} < a_{i^*}$. If $\bar v < \bar u$ lexicographically and both nodes have the same digit-sum, then $\pi(v) > \pi(u)$. 
	
    First, we bound the longest start-to-answer path. We can construct a path $u_1\to \dots \to u_{L}$ based on the string $$w=w_1\dots w_{c(V-2m+2)} = 0^cm^c(m+1)^c\dots (V-1-m)^c(V-1)^c$$where each node $u_j = w_jw_{j+1}\dots w_{j+c-1}$ is a $c$-length slice of the string. The digit-sum is strictly increasing along this path, $\sigma(u_{j+1}) - \sigma(u_j) > 0$, the first node $u_1=0^c$ is guaranteed to be a start and the last node $u_L=(V-1)^c$ is an answer. In addition, recall from the proof of Theorem 1 that every start node $s$ has digit-sum $\sigma(s)< m$ and every answer node $s'$ has digit-sum $\sigma(s') > c(V-1)-m$, so no intermediate node $u_j$ for $1<j <L$ is a start or an answer node. Therefore, the path $u_1\to\dots\to u_L$ is contained in $D_{\pi}$ where $L=c(V-2m+1)+1$. This path has length $c(V-2m+1)$, proving the first bound in the theorem. 
    
	Second, we prove the bound on $L_{\text{cover}}$, i.e., the length of paths required to cover every edge in $D_{\pi}$. Let $u\to v$ be an arbitrary edge in $D_{\pi}$. We will show that this edge is contained in a path of length less than or equal to $2c+1$. We call an edge $u\to v$  of $B(V,c)$ \textit{admissible} if $u$ is not an answer node, $v$ is not a start node, and $\pi(u) < \pi(v)$. Every edge of $D_{\pi}$ is admissible, but not every admissible edge in $B(V,c)$ is in $D_{\pi}$ because any admissible edges that are not on a start-to-answer path in $D_{\pi}$ are pruned.
    
	We start by proving that there exists a path from a start node to $u = a_1a_2\dots a_c$ that has length less than or equal to $c$. Consider a sequence of $c+1$ nodes $(u_0,u_1,\dots,u_c)$ such that $u_0=0^c$, $u_c=u$ and each intermediate $u_j = 0^{c-j}a_1a_2\dots a_{j}$. Let $i_0$ be the index of the latest start node in the sequence. We will prove that the path $u_{i_0} \to u_{i_0+1} \to \dots \to u_c$ is a valid path in $D_{\pi}$ of length $c-i_0$. If $u$ is a start node, then the path is of length $0$ and is trivially in $D_{\pi}$. In addition, no node in the path is an answer node since $u$ has an outgoing edge to $v$ and the intermediate nodes have a digit-sum of at most $(c-1)(V-1)$ which is smaller than the minimum answer node digit-sum of $c(V-1)-m+1$. Consider a transition $u_{j-1}\to u_{j}$ in the path for $j> i_0$. There are two cases.
    \begin{enumerate}
        \item If $a_j > 0$, then $\sigma(u_{j}) > \sigma(u_{j-1})$ so $\pi(u_{j}) > \pi(u_{j-1})$ and the edge $u_{j-1}\to u_j$ is admissible.
        \item If $a_j = 0$, we can compare $u_j$ with $u_{j-1}$ using the reverse lexicographic sort. Let $i^* = \max\{i : i < j \text{ and }a_i > 0\}$ which is well-defined for any $u_j$ in the path $u_{i_0} \to u_{i_0+1} \to \dots \to u_c$ since $u_j\neq 0^c$ for any $j > i_0$. The character $a_{i^*}$ is at position $c-j+i^*+1$ in node $u_{j-1}$ and it is at the earlier position $c-j+i^*$ in node $u_j$ with every subsequent character being $0$, which is smaller than $a_{i^*}$. This means $\bar u_j < \bar u_{j-1}$ in the reversed lexicographic sort which we are using for tie-breaking, so $\pi(u_{j}) > \pi(u_{j-1})$ and, once again, the edge $u_{j-1}\to u_{j}$ is admissible. 
    \end{enumerate}
    
	Therefore, every edge of $u_{i_0} \to u_{i_0+1} \to \dots \to u$ is admissible. This means there is a walk of admissible edges from a start node to an arbitrary node $u$, of length $\leq c$.
    
	Next, we prove that there exists a path from an arbitrary node $v=b_1b_2\dots b_c$ in $D_{\pi}$ to an answer node that has length less than or equal to $c$. There are two cases.
    \begin{enumerate}
        \item Each $b_j < (V-1)$. Consider a sequence of $c+1$ nodes $(v_0, v_1, \dots, v_c)$ such that $v_c = (V-1)^c$ is the largest possible node, $v_0=v$ and each intermediate $v_j = b_{j+1}b_{j+2}\dots b_{c}(V-1)^j$. Let $k_0$ be the index of the earliest answer node in the sequence. We will prove that the path $v_0\to v_1\to\dots \to v_{k_0}$ is a valid path in $D_{\pi}$. If $v$ is an answer node, then the path has length $0$ and is trivially in $D_{\pi}$. In addition, no node in the path is a start node since $v$ has an incoming edge from $u$ and the intermediate nodes have a digit-sum of at least $V-1$ which is larger than the maximum start node digit-sum of $m-1$. Consider a transition $v_{j-1} \to v_j$ in the path. Since $b_{j} < V-1$, $\sigma(v_{j}) > \sigma(v_{j-1})$ so the edge $v_{j-1} \to v_j$ is admissible. Therefore, the path $v_0\to v_1\to\dots \to v_{k_0}$ is a walk of admissible edges of length $\leq c$.
        \item There exists a $j$ such that $b_{j} = V-1$. Let $i_1 = \min\{i : b_i=(V-1)\}$. Consider a path $v_0\to v_1\to\dots\to v_k$ for $v_0=v$ and to an answer node $v_k$. This path can be generated by a string $b_1b_2\dots b_{c+k}$ where each $v_j = b_{j+1}\dots b_{j+c}$. We will prove by contradiction that $k \leq i_1$, otherwise the path cannot exist in $D_{\pi}$. Consider $k > i_1$ and assume the path exists in $D_{\pi}$. There is a transition $v_{i_1-1}\to v_{i_1}$ in the path. This transition can be written in characters as $(V-1) b_{i_1+1}\dots b_{c+i_1-1}\to b_{i_1+1}\dots b_{c+i_1}$. Since the transition is represented by an edge in $D_{\pi}$, it must be true that $\sigma(v_{i_1-1}) \leq \sigma(v_{i_1})\implies (V-1)\leq b_{c+i_1}\implies b_{c+i_1}=(V-1) \implies \sigma(v_{i_1-1}) = \sigma(v_{i_1})$, resulting in a tie for the digit-sum. Comparing the reverse lexicographic sort yields $\bar v_{i_1} > \bar v_{i_1-1} \implies \pi(v_{i_1}) < \pi(v_{i_1-1})$  provided $v_{i_1} \neq (V-1)^c$ is not an answer (true by assumption). This is a result of a direct comparison of $v_{i_1-1}=(V-1)d_1\dots d_{c-1}$ to $v_{i_1}=d_1\dots d_{c-1}(V-1)$ where at least one of the $d_j < V-1$. This means there cannot be an edge from $v_{i_1-1}\to v_{i_1}$ contradicting our assertion that the path $v_0\to v_1\to\dots\to v_k$ exists in $D_{\pi}$. Therefore it must be true that $k\leq i_1\leq c$. It follows that, if there exists a path in $D_{\pi}$ from $v$ to an answer node (true by assumption that the edge $u\to v$ exists in $D_{\pi}$), its length is upper-bounded by $c$.
    \end{enumerate}
		 
	If the edge $u\to v$ exists in $D_{\pi}$, then by the two arguments above there is a walk of admissible edges from a start node to $u$ of length $\leq c$, and a walk of admissible edges from $v$ to an answer node of length $\leq c$. Concatenating these with the edge $u\to v$ gives a walk $W$ from a start node to an answer node in which every edge is admissible, so $\pi$ is strictly increasing along $W$. Hence no node of $W$ repeats and $W$ is a path. Since $W$ runs from a start node to an answer node, all the edges of $W$ are in $D_{\pi}$, so $W\subseteq D_{\pi}$. The length of $W$ is at most $2c+1$ and it contains any edge $u\to v$. 

    A straightforward evaluation of the limit $\lim_{c\to \infty} \frac{2c+1}{c(V-2m+1)}$ shows that the ratio goes to zero asymptotically.
\end{proof}




\section{details for the synthetic experiments in \cref{fig:synthetic_exp} and \cref{fig:remapped}}
\label{app:synth_exp}
We trained small ($\sim$0.8M parameter) decoder-only Transformer models to auto-regressively predict token sequences. Each sequence corresponds to a start-to-answer path on a De Bruijn DAG. These DAGs were created by the same procedure used when defining $D_{\pi}$. We chose to scale the number of start nodes $S$ and answer nodes $N$ with the size of the De Bruijn graph, but our results were robust to ablations across both values. The topological sorts $\pi$ were chosen uniformly at random for five replicate experiments per choice of $V$ and $c$. For each of these DAGs, we created three corresponding remapped DAGs where token values were permuted at random once every 4, 8 or 16 tokens.

Our experiments were on DAGs of the following conditions.
\begin{table}[h]
    \centering
    \begin{tabular}{cccc}
        \toprule
        $V$ & $c$ & $S$ & $N$ \\
        \midrule
        8  & 2 & 5  & 8  \\
        10 & 2 & 6  & 10 \\
        12 & 2 & 8  & 12 \\
        \midrule
        8  & 3 & 10 & 20 \\
        10 & 3 & 12 & 24 \\
        11 & 3 & 14 & 28 \\
        12 & 3 & 15 & 30 \\
        \midrule
        8  & 4 & 12 & 24 \\
        9  & 4 & 14 & 28 \\
        10 & 4 & 16 & 32 \\
        \bottomrule
    \end{tabular}
    \caption{The parameters for the De Bruijn DAGs in the experiment.}
    \label{tab:conditions}
\end{table}

For each DAG, we trained a decoder-only pre-norm Transformer model to predict its start-to-answer paths from scratch. The model vocabulary size was equal to the vocabulary size of the DAG with an additional special end of sequence token. It used a linear embedding layer with embedding dimension 128 followed by 4 subsequent Transformer Blocks and then a root-mean-squared norm (RMSNorm) before the output linear layer that predicts logits. The input embedding, output linear layer and RMSNorm weights were all trainable. Each Transformer block consisted of, first, an RMSNorm followed by a multi-head attention with Rotary Positional Embeddings (RoPE) and a residual addition between the output of the attention module and the original input to the Transformer block. The attention module used 4 heads of dimension 32 each. The RoPE used wavelengths of $10000^{(2k)/d_k}$ where $d_k=32$ is the dimension of the key and query vectors and $k$ is half the feature index, rounded down to the nearest integer. Following the first residual connection, there was a second RMSNorm and then a Swish-Gated Linear Unit (SwiGLU) with a feed-forward dimension 341. Finally, there was a second residual connection which added the output of the SWiGLU block to the output of the first residual connection after the attention computation. 

To evaluate the models, we drew 8000 samples uniformly at random over the set of start-to-answer paths in the DAG and used them to compute the path coverage and P(invalid tokens) metrics. 

We trained the models to predict the next token in the sequences using cross-entropy loss. We used the AdamW optimizer with learning rate 0.001, weight decay 0.01, betas (0.9, 0.98) and batch size 512. For~\cref{fig:synthetic_exp} (left), we trained the model for one epoch and evaluated both Path coverage and P(invalid tokens) at regular geometrically-spaced intervals between 316 samples and 316,228 samples with 22 intermediate points (each point was roughly 1.4$\times$ the previous). Training was stopped at the intermediate point where both criteria were met: path coverage $>$ 0.95 and P(invalid tokens) $<$ 0.05. Then, we used linear interpolation to estimate the exact number of samples at which these thresholds were crossed. The geometric intervals were chosen to be sufficiently tight that increasing the number of intermediate points did not have any significant impact on the results. 

To create~\cref{fig:synthetic_exp} (right), the model was trained for 12 epochs on 64,000 training samples that were drawn uniformly at random from the set of start-to-answer paths given that they had length smaller than the training cutoff $L_{\text{train}}$. We evaluated path coverage and P(invalid tokens) on every value of $L_{\text{train}}$ until both criteria were met and additionally plotted the longest covered path for that training length. 

\section{details of the story task in \cref{fig:tom_results} and \cref{fig:storytask}}
\label{app:story}
\subsection{example problems and reasoning}
An example problem is
\begin{framed}
\footnotesize
\noindent Story: Nicholas entered the hostel dorm. Julia entered the hostel dorm.
Nicholas moved the harmonica to the canvas bag, which is also located in
the hostel dorm. Nicholas left the hostel dorm. Julia moved the harmonica
to the leather pouch, which is also located in the hostel dorm.

\noindent Question: In which container will Nicholas search for the harmonica?
\end{framed}

The stories used in training and evaluation are much more substantial than this example, with 20 actions and multiple objects to track. Examples of each kind of reasoning are below.
\begin{enumerate}
\item \textbf{Direct} prediction did not use any reasoning.
\begin{framed}
\footnotesize
\noindent Answer: canvas bag
\end{framed}

\item \textbf{Key-steps} only restated the parts of the story that changed the final answer.
\begin{framed}
\footnotesize
\noindent Tracking the answer to the question through the story.
Nicholas moved the harmonica to the canvas bag, which is
also located in the hostel dorm. $\to$ answer now: canvas bag.
Answer: canvas bag
\end{framed}
\item \textbf{State-tracking} repeated every line of the story followed by a state containing information about the characters and their beliefs that were relevant to the question.
\begin{framed}
\footnotesize
\setlength{\parindent}{0em}
Nicholas entered the hostel dorm.\\
State: Nicholas is in the hostel dorm. Beliefs: none yet.\\
Julia entered the hostel dorm.\\
State: Nicholas is in the hostel dorm. Beliefs: none yet.\\
Nicholas moved the harmonica to the canvas bag, which is
also located in the hostel dorm.\\
State: Nicholas is in the hostel dorm; the harmonica is
in the canvas bag in the hostel dorm. Beliefs: Nicholas
believes the harmonica is in the canvas bag in the hostel dorm.
Nicholas left the hostel dorm.\\
State: Nicholas has left the hostel dorm; the harmonica is in
the canvas bag in the hostel dorm. Beliefs: Nicholas believes
the harmonica is in the canvas bag in the hostel dorm.\\
Julia moved the harmonica to the leather pouch, which is also
located in the hostel dorm.\\
State: Nicholas has left the hostel dorm; the harmonica is in
the leather pouch in the hostel dorm (it started in the canvas bag).
Beliefs: Nicholas believes the harmonica is in
the canvas bag in the hostel dorm.\\
Answer: canvas bag
\end{framed}
In this example, the model emits the state after every step. We also studied conditions with larger state-emission intervals.
\item \textbf{Narration} repeated every line of the story, followed by a restatement of the local changes made by that line.
\begin{framed}
\footnotesize
\setlength{\parindent}{0em}
Nicholas entered the hostel dorm.\\
Note: Nicholas is now in the hostel dorm; present there: Nicholas.
Julia entered the hostel dorm.\\
Note: Julia is now in the hostel dorm; present there: Julia, Nicholas.\\
Nicholas moved the harmonica to the canvas bag,
which is also located in the hostel dorm.\\
Note: Nicholas put the harmonica into the canvas bag in the hostel dorm;
present there: Julia, Nicholas.\\
Nicholas left the hostel dorm.\\
Note: Nicholas left the hostel dorm; still there: Julia.\\
Julia moved the harmonica to the leather pouch, which is also located in
the hostel dorm.\\
Note: Julia put the harmonica into the leather pouch in the hostel dorm;
present there: Julia.\\
Answer: canvas bag
\end{framed}

\end{enumerate}
\subsection{data generation}
A story is a program over a small action language. People enter and leave rooms, move objects into containers or other rooms, tell one another things privately or out loud, and may secretly witness an action or be distracted and miss it. A state tracker maintains, after each action,
\begin{enumerate}
    \item The true location of each object (its container and room), each person's room, and each container's room. 
    \item The beliefs of each person about each object and container's location, as well as whether or not they have knowledge about a certain topic. These are first-order beliefs.
    \item The beliefs of each person about what each other character believes or the topics they know about. These are second-order beliefs. 
\end{enumerate}
Each story is generated by repeatedly updating the state tracker according to the following process. First, the entities in the story are sampled from a curated list. Four person names are sampled from a list of forty, three rooms are sampled from sixteen possibilities, four objects are sampled from twenty possibilities, three containers are sampled from twelve possibilities, and four topics are sampled from a list of 12. All actions that will be generated for the story are confined to using these sampled values. Each story starts with a deterministic opening. Each sampled person enters a uniformly random room (four actions). Each object is moved by a person into a container and the container's location is stated (four actions). After the initial setup, a sequence of twelve additional actions are randomly sampled to construct a storyline. Actions which are invalid because, e.g., an object is already in a container that the action meant to move it to, are resampled. The possible actions and the frequency at which they are chosen are described below. Finally, four questions are used per story, which are generated based on the final state. The questions where substituting different people would result in different answers are prioritized, since they require tracing each person's belief to answer. The procedure for generating questions is outlined below. Unlike~\citet{sclarExploreTheoryMind2024}, we do not use language models during data generation. 

\paragraph{Actions and their frequencies.}
At each step of the sampled body, one of seven action types is drawn
independently from the fixed distribution in Table~\ref{tab:actions}, and its
arguments (which person, object, container, room, or topic) are then chosen
uniformly at random among the sampled entities. A proposal that violates the
tracker's preconditions---for example, moving an object into a container when
the acting person is not in the same room as the object, or telling someone a
fact the speaker believes they already know---is rejected and the step is
resampled (up to a fixed budget). Roughly one third of proposed actions are communicative, e.g., `Alice tells Bob the potato is in the living room'. Private communication may reach a recipient in any room, whereas a broadcast communication reaches those in the speaker's room. 

\begin{table}[h]
\centering
\small
\begin{tabular}{@{}lp{0.60\linewidth}r@{}}
\toprule
Action & Templated sentence & Prob. \\
\midrule
Enter room & \{P\} entered the \{R\}. & 0.170 \\[2pt]
Leave room & \{P\} left the \{R\}. & 0.085 \\[2pt]
Move to container & \{P\} moved the \{O\} to the \{C\}, which is also located in the \{R\}. & 0.255 \\[2pt]
Move to room & \{P\} moved the \{O\} to the \{R\}, leaving the \{C\} in its original location. & 0.170 \\[2pt]
Tell privately (topic) & \{P1\} told privately to \{P2\} about the \{T\}. & 0.092 \\[2pt]
Tell privately (location) & \{P1\} told privately to \{P2\} that the \{O\} is in the \{C\}. & 0.137 \\[2pt]
Tell out loud & \{P\} told out loud about the \{T\}. & 0.092 \\
\bottomrule
\end{tabular}
\caption{The seven action types, the exact sentence template each emits, and the
probability with which each is proposed at every step. Placeholders:
\{P\}, \{P1\}, \{P2\} are persons, \{O\} an
object, \{C\} a container, \{R\} a room, and \{T\} a
topic. Probabilities follow from unnormalized base weights (2,1,3,2,2,3,2),
with the three communication rows scaled by $g_c=0.35$ and the four physical
rows by $1-g_c=0.65$ before normalizing. A witnessed action additionally emits ``While this action was happening, \{P\} witnessed this action in secret (and only this action).''}
\label{tab:actions}
\end{table}
Because invalid proposals are resampled, the realized mix of accepted
actions differs from the proposal distribution. For example, moving an object into a container is proposed most often but is valid only 29\% of the time (the
randomly chosen object is usually not in the actor's room), whereas moving an
object to another room is pre-filtered to valid choices and is accepted almost
always. Averaged over generated stories, the kept actions are approximately
25\% move-to-room, 16\% enter, 14\% leave, 12\% move-to-container, and
11\% each for the three communication types. For 14.3\% of private tellings, another person in the room secretly witnessed the event. In 2.8\% of cases, the recipient gets distracted and misses what they were told. Similarly, when a person places an object in a container, someone outside the room will secretly witness the event 19.9\% of the time, or someone inside the room will not notice the event 11.1\% of the time. 

\paragraph{Question generation.}
Once a story is complete, questions are enumerated deterministically from the
final belief state. The generator iterates over every first-order belief (a person, object pair that refers to the person's belief about the location of the object) and every second-order belief (a triple of two people and an object, referring to where one person thinks another person believes the object is), and instantiates a question for each. The answer to a belief question may differ from
the true location of the object. There are five kinds of questions:
\begin{framed}
\footnotesize
\setlength{\parindent}{0em}
  In which container will \{P\} search for the \{object\}?\\
  In which container does \{A\} think that \{B\} will search for the \{object\}?\\
  In which room does \{A\} think that \{B\} will search for the \{object\}?\\
  Does \{P\} know about \{topic\}? Answer yes or no.\\
  What does \{A\} think about \{B\}'s belief on \{topic\}? (knows about it / does not know about it)
\end{framed}
The first question is a \emph{container-location} question about a first-order belief, while the second is the same kind about a second-order belief. The third question is a \emph{room-location} question about a second-order belief. There are no room-location questions about a first-order belief. The final two questions are \emph{knowledge} questions about first- and second-order beliefs respectively. These categories are taken directly from~\citet{sclarExploreTheoryMind2024}. Knowledge questions have only two answers, while location questions are answered with the room or container name. A question is labeled as \emph{false-belief} when the queried belief disagrees with the truth, and \emph{interesting} when the answer depends on who is asked. 

For each story, we collect all questions, order the interesting ones first, and retain only four to use in the train or test dataset. Note that both datasets are split by stories, so a story in the test dataset will never overlap with one from the train dataset. The full breakdown of the generated problems by type is given in Table~\ref{tab:qtypes}. Overall $29.8\%$ of questions are false-belief questions, in which the queried belief disagrees with the truth.

\begin{table}[h]
\centering
\small
\begin{tabular}{lrr}
\toprule
Question type                  & Share & False-belief share \\
\midrule
Knowledge, second-order        & 55.9\% & 34.9\% \\
Knowledge, first-order         & 18.7\% & 0.0\% \\
Room location, second-order    & 11.8\% & 34.3\% \\
Container location, second-order & 8.4\% & 53.7\% \\
Container location, first-order  & 5.1\% & 33.3\% \\
\midrule
All                            & 100.0\% & 29.8\% \\
\bottomrule
\end{tabular}
\caption{Distribution of the $14{,}261$ generated questions by type, with the
fraction of each type that is a false-belief question (the queried belief
differs from the truth). Second-order questions constitute $76.2\%$
of the corpus.}
\label{tab:qtypes}
\end{table}

\subsection{model and training}
The model and training hyperparameters match the equation-solving task in~\cref{app:interp}. Training and evaluation were repeated across three seeds. Each of the three seeds induces an independent random split of the 479-story corpus into disjoint training and test sets. We used 375 stories for training and 63 held-out stories
for evaluation, with four questions drawn from any single story.
\subsection{evaluation}
Models were given the following system prompt
\begin{framed}
\footnotesize
\setlength{\parindent}{0em}
You read a short story and answer a question about it. You may reason first. Whatever else you write, the last line of your reply must be the answer field: "Answer: $<$answer$>$". Answer a yes/no question with yes or no, a question that offers two phrases in brackets with one of those phrases, and any other question with the name of the container, room or object.
\end{framed}

Every reply must end with an answer field, \texttt{Answer: <answer>} (see the
system prompt above). We take the text following the last such field and compare
it to the ground-truth answer by exact match after some light normalization:
lower-casing, removing markdown and brackets, dropping a leading article or
preposition (e.g., `the', `in'), stripping trailing punctuation, and canonicalizing yes/no. A reply that omits the answer field is graded on its last line, and generally scored incorrect. Answers take one of four forms:
\texttt{yes}/\texttt{no} for first-order knowledge questions,
\texttt{knows about it}/\texttt{does not know about it} for second-order
knowledge questions, a container name for container-location questions, or a
room name for second-order room-location questions. We report final-answer
accuracy on the $252$ held-out questions, averaged over the three seeds
($\pm$ standard error).

\section{details of the equation-solving task in \cref{fig:eqtask}, \cref{fig:math_eval_acc} and \cref{fig:math_eval_trends}}
\label{app:interp}
\subsection{data generation}

The training and test data consisted of a roughly even split of the following five kinds of tasks.
\begin{enumerate}
    \item \textbf{Linear}: Can be solved by a sequence of \textit{mostly} scaling (multiplication or division by an integer on both sides) and shifting (addition or subtraction by an integer on both sides) operations. 
    \begin{enumerate}
\item An example problem is:
\begin{framed}
\footnotesize
\setlength{\parindent}{0em}
Solve for x: -4*(2*x + 13) + 10 = -122\\
Output your answer after \#\#\#\#.
\end{framed}
\item The emission interval $k=\infty$ solution:
\begin{framed}
\footnotesize
\setlength{\parindent}{0em}
subtract 10 from both sides. divide both sides by -4. subtract 13 from both sides. divide both sides by 2. \#\#\#\# 10.
\end{framed}
\item The emission interval $k=1$ solution:
\begin{framed}
\footnotesize
\setlength{\parindent}{0em}
subtract 10 from both sides. Current equation is -4*(2*x + 13) = -132. divide both sides by -4. Current equation is 2*x + 13 = 33. subtract 13 from both sides. Current equation is 2*x = 20. divide both sides by 2. Current equation is x = 10. \#\#\#\# 10.
\end{framed}
\end{enumerate}

\item \textbf{Linear word problem}: A linear problem that is written in the form of a word problem. All word problems have the format in the example.
\begin{enumerate}
\item An example problem is:
\begin{framed}
\footnotesize
\setlength{\parindent}{0em}
I think of a number. I multiply it by -3, then subtract 13, then multiply it by -2. The result is 56. What is my number?\\
Output your answer after \#\#\#\#.
\end{framed}

\item The emission interval $k=\infty$ solution:
\begin{framed}
\footnotesize
\setlength{\parindent}{0em}
write the equation -2*(-3*x - 13) = 56. divide both sides by -2. add 13 to both sides. divide both sides by -3. \#\#\#\# 5.
\end{framed}

\item The emission interval $k=1$ solution:
\begin{framed}
\footnotesize
\setlength{\parindent}{0em}
write the equation -2*(-3*x - 13) = 56. Current equation is -2*(-3*x - 13) = 56. divide both sides by -2. Current equation is -3*x - 13 = -28. add 13 to both sides. Current equation is -3*x = -15. divide both sides by -3. Current equation is x = 5. \#\#\#\# 5.
\end{framed}
\end{enumerate}

\item \textbf{Quadratic}: A quadratic equation that is solved using a sequence of \textit{mostly} scaling and shifting operations, followed by factorization. Up to two solutions.
\begin{enumerate}

\item An example problem is:
\begin{framed}
\footnotesize
\setlength{\parindent}{0em}
Solve for x: 3*(3*(x**2 + 9*x + 8) + 10) = 30\\
Output your answer after \#\#\#\#.
\end{framed}

\item The emission interval $k=\infty$ solution:
\begin{framed}
\footnotesize
\setlength{\parindent}{0em}
divide both sides by 3. subtract 10 from both sides. divide both sides by 3. factor the left hand side. \#\#\#\# -8, -1.
\end{framed}

\item The emission interval $k=1$ solution:
\begin{framed}
\footnotesize
\setlength{\parindent}{0em}
divide both sides by 3. Current equation is 3*(x**2 + 9*x + 8) + 10 = 10. subtract 10 from both sides. Current equation is 3*(x**2 + 9*x + 8) = 0. divide both sides by 3. Current equation is x**2 + 9*x + 8 = 0. factor the left hand side. Current equation is (x + 1)*(x + 8) = 0. \#\#\#\# -8, -1.
\end{framed}
\end{enumerate}

\item \textbf{Cubic}: A cubic equation that is solved using a sequence of \textit{mostly} scaling and shifting operations, followed by factorization. Up to three solutions.
\begin{enumerate}
\raggedright
\item An example problem is:
\begin{framed}
\footnotesize
\setlength{\parindent}{0em}
Solve for x: -23 = 2*(x**3 - x**2 - 2*x - 6) - 11\\
Output your answer after \#\#\#\#.
\end{framed}

\item The emission interval $k=\infty$ solution:
\begin{framed}
\footnotesize
\setlength{\parindent}{0em}
add 11 to both sides. divide both sides by 2. add 6 to both sides. factor the left hand side. \#\#\#\# -1, 0, 2.
\end{framed}

\item The emission interval $k=1$ solution:
\begin{framed}
\footnotesize
\setlength{\parindent}{0em}
add 11 to both sides. Current equation is -12 = 2*(x**3 - x**2 - 2*x - 6). divide both sides by 2. Current equation is -6 = x**3 - x**2 - 2*x - 6. add 6 to both sides. Current equation is 0 = x**3 - x**2 - 2*x. factor the left hand side. Current equation is 0 = x*(x - 2)*(x + 1). \#\#\#\# -1, 0, 2.
\end{framed}
\end{enumerate}

\item \textbf{Product}: An equation that contains a product of two factors. It is solved using a sequence of \textit{mostly} scaling and shifting operations, followed by expansion and more shifting operations, then factorization. Up to two solutions.
\begin{enumerate}
\item An example problem is:
\begin{framed}
\footnotesize
\setlength{\parindent}{0em}
Solve for x: -4*(x + 4)*(x + 8) = 12\\
Output your answer after \#\#\#\#.
\end{framed}

\item The emission interval $k=\infty$ solution:
\begin{framed}
\footnotesize
\setlength{\parindent}{0em}
divide both sides by -4. expand the left hand side. add 3 to both sides. factor the left hand side. \#\#\#\# -7, -5.
\end{framed}

\item The emission interval $k=1$ solution:
\begin{framed}
\footnotesize
\setlength{\parindent}{0em}
divide both sides by -4. Current equation is (x + 4)*(x + 8) = -3. expand the left hand side. Current equation is x**2 + 12*x + 32 = -3. add 3 to both sides. Current equation is x**2 + 12*x + 35 = 0. factor the left hand side. Current equation is (x + 5)*(x + 7) = 0. \#\#\#\# -7, -5.
\end{framed}
\end{enumerate}
\end{enumerate}

We chose this task because it required somewhat diverse forms of generalization (e.g., parsing a word problem), it was not overly simple (there are many possible options at each step) but could still be created and solved algorithmically. All problems are guaranteed to have integer solutions. 

The problems had varying depths (number of operations required to solve them). Each depth between 3 and 10 inclusive contributed to roughly 10\% of the problems. Only 5.8\% of problems had depth 11 and only 1.1\% of problems had depth 12. The depth distribution is a result of how problems were sampled, using the following procedure:
\begin{enumerate}
    \item Choose a problem type out of the five options above and the answer(s) to the problem uniformly at random. Answers were all integers and sampled between [-12,12] for linear problems or linear word problems, [-9,9] for quadratic problems, [-5,5] for cubic problems, and [-8,8] for product problems. The answer ranges narrow with the number of answers to limit the size of any individual coefficient. 
    \item Write the “core” state of the problem. For quadratic and cubic tasks, this is the expanded state just before factorization. For linear it might be \texttt{x=4} or for quadratic \texttt{x**2 - 4 = 0}. For product, the task is written three operations before the final answer in its factored form, e.g., \texttt{(x+4)*(x+8) = -3}. These core states are forced to be consistent with the final answer(s) chosen in the previous step. For example, the product task requires that the sum of both answers is equal to the sum of the values in each factor (-4,-8 in the above example). Given this constraint, other values are chosen at random.
    \item Apply alternating scaling and shifting operations to both sides. They do not change the answer. The operations alternate since, e.g., two shifting operations in a row (+4-2) is directly equivalent to a single shifting operation (+2). 
    \begin{enumerate}
        \item The number of times these operations are applied to a problem varies so that problem types that have more required steps (e.g., quadratics require factorization) have more constrained operations. It is sampled uniformly at random from: [1,11] for linear problems, [1,9] for linear word problems, [1,9] for quadratic problems, [1,8] for cubic problems, and [0,8] for product problems. 
        \item The scaling operations can be “multiply” by an integer factor in {2, 3, 4,-2,-3,-4} with probability 0.8 or “divide” by {2,3,4} with probability 0.2. Division has a lower probability to avoid sampling problems with enormous (unsimplified) fractions. The specific values for a given problem are chosen uniformly at random from their allowed set.
        \item The shifting operation is addition of an integer in [-15,-1] U [1,15]. The specific value is chosen uniformly at random from this set.
    \end{enumerate}
    \item Take the equation after applying shifting and scaling operations. Apply an x-shift (add or subtract $a*x$ to both sides for integer $a \in [1,6]$) with probability 0.35 for linear, 0.0 for linear word problems, 0.3 for quadratic problems and 0.25 for cubic and product problems (more difficult problem types had a lower chance of x-shift).
    \item Apply a flip (switch the right hand side and left hand side) with probability 0.25 across all problem types except linear word problems, which this does not apply to. 
    \item If the problem type is a linear word problem, apply the word problem template.
    \item Check against existing problems in the training or test set to make sure it is not a duplicate. Filter out any duplicates. Verify the problem simplifies to the desired answer using SymPy.
\end{enumerate}

We applied this procedure to create 50,000 train and 1,000 test samples for each of the three replicate experiments. The train and test samples are verified to be distinct from one another. The ground-truth solutions for each interval $k$ were generated algorithmically by working backwards through the operations. The generated reasoning traces were checked by a general solver which applied the stated operations in order and verified if they produce the correct answer. 

\subsection{model and training}
We fine-tuned and evaluated the Qwen2.5-1.5B-Instruct model on these tasks with LoRA adapters. One LoRA adapter was trained for each condition. The LoRA adapters had rank 32, $\alpha=32$, dropout 0.05, no bias and targeted the following kinds of matrices: \texttt{q\_proj, k\_proj, v\_proj, o\_proj, gate\_proj, up\_proj, down\_proj}. This list includes the multi-layer perceptron gate, up projection and down projection weights. The adapters covered roughly 2.3\% of the parameters, and all training and evaluations were in bfloat16. 

The inputs followed the chat template with a system prompt:
\begin{framed}
\footnotesize
\setlength{\parindent}{0em}
You solve single-variable equations. Work step-by-step, one algebraic operation per sentence, and give the integer solutions at the end, comma-separated if there is more than one.
\end{framed}
The `user' field contained the problem statement and the answer was produced in the `assistant' field. The training loss was next-token cross-entropy loss on the tokens in the assistant field (the reasoning and answer tokens). 

Training was done over 50,000 training samples for 3 epochs. Each of the three replicate experiments required fine-tuning with a different training set. The optimizer was AdamW with 0 weight decay, learning rate 1e-4, default $\beta=(0.9,0.999)$. We used a cosine scheduler with 3\% warmup and a small batch size of 32. Gradients were clipped to 1.0. 

\subsection{evaluation}
We evaluated models on 1,000 held-out test samples using greedy decoding. The test samples were different for each of the three replicate experiments. Decoding proceeds one reasoning step at a time, and perturbations to the chain of thought never interrupt a step in progress. At each step, the model generates the relevant tokens that are cut off at a period (or a \#\#\#\# at the last step). A step can be followed by a state emission, which also ends in a period. The maximum number of steps (not including state emissions) in decoding was 96, and the maximum number of tokens per step (and per state emission) was 64. We trained and evaluated models up to a state emission interval of k=7. This ensured that more than 50\% of the data for any condition would have at least one state emission (53.3\% of the test data at $k=7$, to be precise).

\subsubsection{decoding with perturbations}
Random, answer-preserving steps were added to the reasoning traces with perturbation probability $p$ at any step. State emissions start with the tokens “Current equation is” and are not changed by perturbations. With probability $p$, the current operation is replaced by a multiply, divide, add or subtract operation sampled uniformly at random. The allowed multiply or divide integers are in {2,3,4,5} and the allowed add or subtract integers are in the range [1,12]. This replacement step is included in the context before the model emits the next state or outputs the next step. 

For example, for emission interval $k=1$:
\begin{framed}
\footnotesize
\setlength{\parindent}{0em}
add 12 to both sides.\\
Current equation is 64 = -12 * x - 44.\\
divide both sides by 5.         ← PERTURBATION\\
Current equation is 64/5 = -12 * x/5 - 44/5   ← The model must predict this.
\end{framed}
Recall that operations are applied to both sides of the equation, so a perturbation will never change the final answer. A robust reasoning model should overcome this detour and output the correct answer. 

\subsubsection{measuring accuracy}
The final-answer accuracy was measured by finding the first occurrence of `\#\#\#\#' and evaluating the following string. That string was separated based on commas or whitespace, converted to integer values, and compared with the correct answer(s). 

The derivation accuracy measured the correctness of the final answer as well as the correctness of the intermediate reasoning operations (not the state emissions). Computing this metric first required generating a list of reasoning operations and state emissions from the output string, which were separated by periods. We threw out any state emissions since they are not evaluated, i.e., any element of the list that starts with \texttt{Current equation is}. The sequence of reasoning steps, e.g., [\texttt{add 12 to both sides.}, \texttt{divide both sides by 2.}], was parsed algorithmically and applied to the problem statement. If the applied steps resulted in the correct simplified form, e.g., \texttt{x=10} or \texttt{(x-5)*(x-3)=0}, and the final answer was correct, then the derivation was also considered correct. 

The ignored accuracy measured the derivation accuracy while removing any steps that were perturbed. The perturbation positions were always tracked during decoding, so this evaluation used the same evaluation mechanism as derivation accuracy and ignored the steps at the perturbation positions.

The replaced accuracy tested if the model was applying the correct steps in order by implicitly replacing any externally added incorrect steps. It measured the derivation accuracy if the perturbation steps were replaced by the correct steps at those positions. Each problem had a canonical ordering of steps to solve it, so it was possible to take the perturbed steps and replace them with the canonical step that would have been in its position. If the model output was longer than the canonical answer, it was considered incorrect. 

We also evaluated intermediate state emissions using two different definitions of correctness. The first, called anchored, is whether or not a state emission is correct given the previous state emissions and the operations leading up to the current state emission. The second, called cumulative, is whether or not a state emission is correct given the starting equation and the entire history of operations, while ignoring any prior state emissions. In both cases, the correct state emission was computed algorithmically. Then, we used SymPy to check whether the correct state matched the output of the model. If the cumulative accuracy is higher than anchored accuracy, that is an indication that the model tends to recover from mistakes in its state emissions in future states. If the opposite is true, it indicates that the model drifts due to an incorrect state-emission. 

\subsection{testing alternative hypotheses for the discrepancy between derivation accuracy and final-answer accuracy}
\begin{figure}
    \centering
    \includegraphics[width=0.4\linewidth]{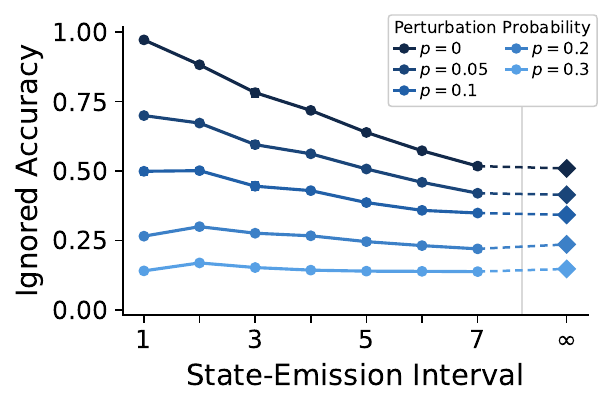}
    \includegraphics[width=0.4\linewidth]{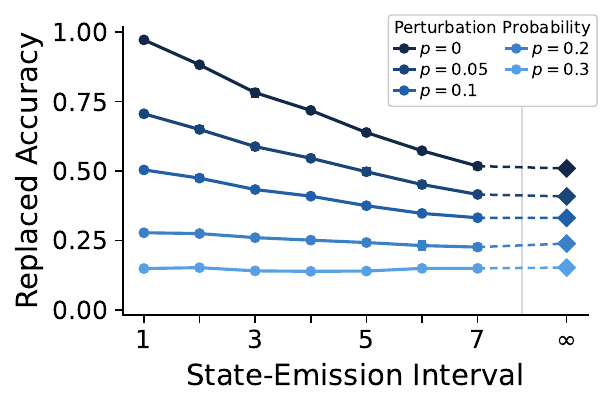}
    \caption{The ignored and replaced accuracies show similar trends as the derivation accuracy in \cref{fig:math_eval_trends} (left). They do not explain the discrepancy between the final-answer accuracy in \cref{fig:math_eval_acc} and the derivation accuracy in \cref{fig:math_eval_trends} (left).}
    \label{fig:ignored_replaced}
\end{figure}

We explored two alternative hypotheses about why training models to use large state-emission intervals $k$ leads them to be more robust to perturbations. Our results indicate this is because large $k$ causes models to fall back on pattern recognition by default. However, it is possible that these models used step-by-step reasoning but either (i) ignored operations that were added due to perturbations and reasoned around them, or (ii) implicitly replaced the perturbed operations with a correct one. 

To test both of these, we computed the ``ignored accuracy'' and the ``replaced accuracy''. They are computed similarly to the derivation accuracy except, the ignored accuracy skips any steps that were added due to perturbations, and the replaced accuracy implicitly replaces those steps with the canonically correct one at that position. \cref{fig:ignored_replaced} shows that neither of these measures recovers the trend seen in the final-answer accuracy. 

\section{details for evaluating qwen3 models on reasoning benchmarks in \cref{fig:knockout_exp}}
\label{app:knockout}
To generate answers using Qwen3-14B and Qwen3-32B (\cref{fig:knockout_exp}), we used top-$p$ decoding with $p=0.95$ and a temperature of $0.6$. Thinking mode was on. The max context length was 12,800 tokens. We evaluated exactly 192 randomly sampled problems per benchmark.

\end{document}